\newtheorem{theorem}{Theorem}
\newtheorem{lemma}{Lemma}
\newtheorem{observation}{Observation}
\newtheorem{proposition}{Proposition}
\newtheorem{corollary}{Corollary}
\newtheorem{definition}{Definition}
\newtheorem*{theorem2}{Theorem 2}
\newtheorem*{theorem3}{Theorem 3}
\newcommand\blfootnote[1]{%
  \begingroup
  \renewcommand\thefootnote{}\footnote{#1}%
  \addtocounter{footnote}{-1}%
  \endgroup
}
\title[Fairness for Workers Who Pull the Arms: An Index Based Policy for Allocation of Restless Bandit Tasks]{Fairness for Workers Who Pull the Arms: An Index Based Policy for Allocation of Restless Bandit Tasks}
\author{Arpita Biswas*}
\affiliation{
  \institution{Harvard University} 
  \city{Cambridge} 
  \country{United States of America}}
\email{arpitabiswas@hsph.harvard.edu}
\author{Jackson A. Killian*}
\affiliation{
  \institution{Harvard University} 
  \city{Allston}  
  \country{United States of America}}
\email{jkillian@g.harvard.edu}
\author{Paula Rodriguez Diaz}
\affiliation{
  \institution{Harvard University}  
  \city{Allston}  
  \country{United States of America}}
\email{prodriguezdiaz@g.harvard.edu}
\author{Susobhan Ghosh}
\affiliation{
  \institution{Harvard University}  
  \city{Allston}  
  \country{United States of America}}
\email{susobhan_ghosh@g.harvard.edu}
\author{Milind Tambe}
\affiliation{
  \institution{Harvard University}  
  \city{Allston}  
  \country{United States of America}}
\email{milind_tambe@harvard.edu}
\begin{abstract}
Motivated by applications such as machine repair, project monitoring, and anti-poaching patrol scheduling, we study intervention planning of stochastic processes under resource constraints. This planning problem has previously been modeled as restless multi-armed bandits (RMAB), where each arm is an intervention-dependent Markov Decision Process. However, the existing literature assumes all intervention resources belong to a single uniform pool, limiting their applicability to real-world settings where interventions are carried out by a set of workers, each with their own costs, budgets, and intervention effects. 
In this work, we consider a novel RMAB setting, called multi-worker restless bandits (MWRMAB) with heterogeneous workers. The goal is to plan an intervention schedule that maximizes the expected reward while satisfying budget constraints on each worker as well as fairness in terms of the load assigned to each worker. Our contributions are two-fold: (1)~we provide a multi-worker extension of the Whittle index to tackle heterogeneous costs and per-worker budget and (2)~ we develop an index-based scheduling policy to achieve fairness. Further, we evaluate our method on various cost structures and show that our method significantly outperforms other baselines in terms of fairness  without sacrificing much in reward accumulated.\blfootnote{* Both the authors contributed equally.}
\end{abstract}
\keywords{Sequential Decision Making; Intervention; Multi-Armed Bandits}
\newcommand{\BibTeX}{\rm B\kern-.05em{\sc i\kern-.025em b}\kern-.08em\TeX}
\begin{document}


\pagestyle{fancy}
\fancyhead{}


\maketitle 


\section{Introduction}

Restless multi-armed bandits (RMABs)~\cite{whittle1988restless} have been used for sequential planning, where a planner allocates a limited set of $M$ \textit{intervention resources} across $N$ \textit{independent heterogeneous arms} (Markov Decision processes) at each time step in order to maximize the long-term expected reward. The term \textit{restless} denotes that the arms undergo state-transitions even when they are not acted upon (with a different probability than when they are acted upon). RMABs have been receiving increasing attention across a wide range of applications such as maintenance \citep{abbou2019group}, recommendation systems~\cite{meshram2015restless},  anti-poaching patrolling~\citep{qian2016restless}, adherence monitoring~\citep{Akbarzadeh2019,mate2020collapsing}, and intervention planning~\cite{mate2021field,biswas2021learning,killian2023robust}.  
Although, \textit{rangers} in anti-poaching, \textit{healthcare workers} in health intervention planning, and \textit{supervisors} in machine maintenance are all commonly cited examples of human workforce used as intervention resources, the literature has so far ignored one key reality that the human workforce is heterogeneous---each worker has their own workload constraints and needs to commit a dedicated time duration for intervening on an arm. Thus, it is critical to restrict intervention workload for each worker and balance the workload across them, while also ensuring high effectiveness (reward) of the planning policy.


RMAB literature does not consider this heterogeneity and mostly focuses on selecting best arms assuming that all intervention resources (workers) are interchangeable, i.e., as from a single pool (homogeneous). However, planning with human workforce requires more expressiveness in the model, including heterogeneity in costs and intervention effects,  worker-specific load constraints, and balanced work allocation. 
One concrete example is \textit{anti-poaching intervention planning}~\cite{qian2016restless} with $N$ areas in a national park where timely interventions (patrols) are required to detect as many snares as possible across all the areas. These interventions are carried out by a small set of $M$ ranger. The problem of selecting a subset of areas at each time step (say, daily) has been modeled as an RMAB problem.  However, each ranger may  incur heterogeneous cost (e.g., distance travelled, when assigned to intervene on a particular area) and the total cost incurred by any ranger (e.g., \textit{total} distance traveled) must not exceed a given budget. Additionally, it is important to ensure that tasks are allocated fairly across rangers so that, for e.g., some rangers are not required to walk far greater distances than others. Adding this level of expressiveness to existing RMAB models is non-trivial.


To address this, we introduce the \textit{multi-worker restless multi-armed bandits} (MWRMAB) problem. Since MWRMABs are more general than the  classical RMABs, they are at least PSPACE hard to solve optimally~\citep{papadimitriou1994complexity}. RMABs with $k$-state arms require solving a combined MDP with $k^N$ states and $|M+1|^N$ actions constrained by a budget, and thus suffers from the curse of dimensionality. A typical approach is to compute Whittle indices~\citep{whittle1988restless} for each arm and choose $M$ arms with highest index values---an asymptotically optimal solution under the technical condition \textit{indexability}~\citep{weber1990index}. However, this approach is limited to instances a single type of intervention resource incurring one unit cost upon intervention. A few papers on RMABs~\citep{glazebrook2011general,meshram2020simulation} study multiple interventions and non-unitary costs but assumes one global budget (instead of per-worker budget).  Existing solutions aim at maximizing reward by selecting arms with highest index values that may not guarantee fairness towards the workers who are in charge of providing interventions. 

\paragraph{Our contributions} To the best of our knowledge, we are the first to introduce and formalize the multi-worker restless multi-armed bandit (MWRMAB) problem and a related worker-centric fairness constraint. We develop a novel framework for solving the MWRMAB problem. Further, we empirically evaluate our algorithm to show that it is fair and scalable across a range of experimental settings.



\section{Related Work}
\paragraph{Multi-Action RMABs and Weakly Coupled MDPs}
\cite{glazebrook2011general} develop closed-form solutions for multi-action RMABs using Lagrangian relaxation. \cite{meshram2020simulation} build simulation-based policies that rely on monte-carlo estimation of state-action values. However, critically, these approaches rely on actions being constrained by a single budget, failing to capture the heterogeneity of the workforce. On the other hand, weakly coupled MDPs (WCMDPs)~\cite{hawkins2003langrangian}  allow for such multiple budget constraints; this is the baseline we compare against.  
Other theoretical works~\cite{adelman2008relaxations,gocgun2012lagrangian} have developed solutions in terms of the reward accumulated, but may not scale well with increasing problem size. 
These papers do not consider fairness, a crucial component of MWRMABs, which our algorithm addresses.

\paragraph{Fairness} in stochastic and contextual bandits \citep{patil2020achieving,joseph2016fairness,chen2020fair} has been receiving significant attention. However, fairness in RMABs has been less explored. Recent works \cite{herlihy2021planning, prins2020incorporating} considered quota-based fairness of RMAB arms assuming that arms correspond to human beneficiaries (for example, patients). However, in our work, we consider an orthogonal problem of satisfying the fairness among intervention resources (workers) instead of arms (tasks).

\paragraph{Fair allocation} of discrete items among a set of agents has been a well-studied topic~\citep{brandt2016handbook}. Fairness notions such as envy-freeness up to one item~\citep{budish2011combinatorial} and their budgeted settings~\citep{wu2021budget,biswas2018fair} align with the fairness notion we consider.  However, these papers do not consider non-stationary (MDP) items. Moreover, these papers assume that each agent has a value for every item; both fairness and efficiency are defined with respect to this valuation. In contrast, in MWRMAB, efficiency is defined based on reward accumulated, and fairness and budget feasibility are defined based on the cost incurred. 

\section{The Model}
There are $M$ workers for providing interventions on $N$ independent arms that follow Markov Decision Processes (MDPs). Each MDP $i\in[N]$ is a tuple $\langle S_i, A_i, C_i, P_i, R_i\rangle$, where $S_i$ is a finite set of states. We represent each worker as an action, along with an additional action called \textit{no-intervention}. Thus, action set is $A_i \subseteq [M]\cup\{0\}$. $C_i$ is a vector of costs $c_{ij}$ incurred when an action $j\in[A_i]$ is taken on an arm $i\in[N]$, and $c_{ij}=0$ when $j=0$. 
$P^{ss'}_{ij}$ is the probability of transitioning from state $s$ to state $s'$ when arm $i$ is allocated to worker $j$. 
 $R_i(s)$ is the reward obtained in state $s\in S_i$. 

The goal (Eq.~\ref{eq:RMAB_LP}) is to allocate a subset of arms to each worker such that the expected reward is maximized while ensuring that each worker incurs a cost of at most a fixed value $B$. Additionally, the disparity in the costs incurred between any pair of workers does not exceed a \textit{fairness threshold} $\epsilon$ at a given time step. Let us denote a policy $\pi:\times_i S_i \mapsto \times_i A_i$ that maps the current state profile of arms to an action profile. $x^{\pi}_{ij}(s)\in\{0,1\}$ indicates whether worker $j$ intervenes on arm $i$ at state $s$ under policy $\pi$. The total cost incurred by $j$ at a time step $t$ is given by 
$\overline{C}_j^{\pi}(t):=\sum_{i\in N} c_{ij}x^{\pi}_{ij}(s_i(t))$, where  $s_i(t)$ is the current state. $\epsilon \geq c^{m}:=\max_{ij} c_{ij}$ ensures feasibility of the fairness constraints.
\begin{equation}
\begin{aligned}
 \underset{\pi}{\max}
 & \limsup_{T\rightarrow \infty} \frac{1}{T}\displaystyle  \sum_{i\in [N]} \mathbb{E}\left[ \sum_{t=1}^{T} R_i(s_i(t))\ x^{\pi}_{ij}(s_i(t))\right]\\
 \text{s.t.} 
 & \displaystyle \sum_{i\in N} x_{ij}^{\pi}(s_i(t))\ c_{ij} \leq B, \qquad\quad \  \ \forall \ j\in[M],\ \forall \ t\in\{1,2,\ldots\}\\
  & \displaystyle \sum_{j\in A_i} x^{\pi}_{ij}(s_i(t)) = 1, \quad\qquad\qquad \forall \ i\in[N],\ \forall \ t\in\{1,2,\ldots\}\\
 & \max_j \overline{C}^{\pi}_j(t) - \min_j \overline{C}^{\pi}_j(t) \leq \epsilon, \quad \forall \ t\in \{1,2,\ldots\}\\ 
 & x_{ij}^{\pi}(s_i(t))\in\{0,1\}, \qquad\qquad\quad\ \ \forall i, \ \forall j,\ \forall t.\label{eq:RMAB_LP}
\end{aligned}
\end{equation}

When $M=1$ and $c_{i1}=1$, Problem~(\ref{eq:RMAB_LP}) becomes classical RMAB problem (with two actions, \textit{active} and \textit{passive}) that can be solved via Whittle Index method~\citep{whittle1988restless} by considering a time-averaged relaxed version of the budget constraint and then decomposing the problem into $N$ subproblems---each subproblem finds a \textbf{charge} $\lambda_i(s)$ on active action that makes passive action as valuable as the active action at state $s$. It then selects top $B$ arms according to $\lambda_i$ values at their current states. However, the challenges involved in solving a general MWRMAB~(Eq.~\ref{eq:RMAB_LP}) are (i)~index computation becomes non-trivial with $M>1$ workers and (ii)~selecting top arms based on indices may not satisfy fairness. To tackle these challenges, we propose a framework in the next section.  

\section{Methodology}
\textbf{Step 1}: Decompose the combinatorial MWRMAB problem to $N\times M$ subproblems, and compute Whittle indices $\lambda^\star_{ij}$ for each subproblem. We tackle this in Sec.~\ref{sec:step1}. This step assumes that, for each arm $i$, MDPs corresponding to any pair of workers are mutually independent. However, the expected value of each arm may depend on interventions taken by multiple workers at different timesteps. \\
\textbf{Step 2}: Adjust the decoupled indices $\lambda^*_{ij}$ to create $\lambda^{adj,*}_{ij}$, detailed in Sec.~\ref{sec:adjusted_index}. \\
\textbf{Step 3}: The adjusted indices are used for allocating the arms to workers while ensuring \textbf{fairness} and\textbf{ per-timestep budget feasibility }among workers, detailed in Sec.~\ref{sec:allocation}. 

\subsection{Identifying subproblem structure}\label{sec:step1}
To arrive at a solution strategy, we relax the per-timestep budget constraints of Eq.~\ref{eq:RMAB_LP} to time-averaged constraints, as follows:  
$ \frac{1}{T}\sum_{i\in [N]}\mathbb{E} \sum_{t=1}^T x_{ij}^{\pi}(s_i(t))\ c_{ij}\leq B, \  \forall  j\in[M].$ 
The optimization problem~(\ref{eq:RMAB_LP}) can be rewritten as: 
\begin{flalign}
\underset{\{\lambda_j\geq 0\}}{\min} \  \underset{\pi}{\max} \ \ 
&\limsup_{T\rightarrow \infty} \frac{1}{T}\displaystyle  \sum_{i\in [N]} \mathbb{E}\left[\sum_{t=1}^{T} \sum_{j\in[M]} \Big(R_i(s_i(t)) x^{\pi}_{ij}(s_i(t))\right.\nonumber\\
&\left.\qquad \qquad+\lambda_{j} (B-c_{ij} x^{\pi}_{ij}(s_i(t))\Big)\ \right]\ \ \ \nonumber\\
\text{s.t.} & \displaystyle \sum_{j\in A_i} x^{\pi}_{ij}(s_i(t)) = 1, \qquad \ \forall \ i\in[N], \ t\in\{1,2,\ldots\}\quad \nonumber\\
 &\max_j \overline{C}^{\pi}_j(t) - \min_j \overline{C}^{\pi}_j(t) \leq \epsilon, \qquad\ \  \forall \ t\in \{1,2,\ldots\}\ \ \  \nonumber\\
& x_{ij}^{\pi}(s_i(t))\in\{0,1\}, \qquad\quad\qquad\qquad \forall i,\ \forall j,\  \forall t\qquad \ 
\label{eq:RMAB_lagrange}
\end{flalign}
Here, $\lambda_j$s are Lagrangian multipliers corresponding to each relaxed budget constraint $j\in[M]$. 
Furthermore, as mentioned in \cite{glazebrook2011general}, if an arm $i$ is \textit{indexable}, then the optimization objective~(\ref{eq:RMAB_lagrange}) can be decomposed into $N$ independent subproblems, and separate index functions can be defined for each arm $i$.  Leveraging this, we decompose our problem 
 to $N\times M$ subproblems, each finding the minimum $\lambda_{ij}$ that maximizes the following:
\begin{equation}
\begin{aligned}
\limsup_{T\rightarrow \infty} \frac{1}{T}\displaystyle  \mathbb{E}\!\!\left[\sum_{t=1}^{T} \left(R_i(s_i(t))-\lambda_{ij}c_{ij}\right) x^{\pi}_{ij}(s_i(t))\right]
\label{eq:RMAB_lagrange_sub}
\end{aligned}
\end{equation}

Note that, the maximization subproblem~(\ref{eq:RMAB_lagrange_sub}) does not have the term $\lambda_{ij} B$ since the term does not depend on the decision $x^{\pi}_{ij}(s_i(t))$. Considering a $2$-action MDP with action space $\mathcal{A}_{ij}=\{0,j\}$ for an arm-worker pair, the maximization problem (\ref{eq:RMAB_lagrange_sub}) can be solved by dynamic programming methods using Bellman's equations for each state to decide whether to take an active action ($x_{ij}(s)=1$) when the arm is currently at state $s$:
\begin{eqnarray}
 && V^t_{i,j}(s,\lambda_{ij}, x_{ij}(t))\!=\! \begin{cases} 
      R_i(s)\!-\lambda_{ij} c_{ij} +\!\!\!\! \displaystyle \sum_{s'\in S_i}P^{ij}_{ss'}V_{i,j}^{t+1}(s',\lambda_{ij}),\nonumber \\
      \qquad\qquad\qquad\qquad\qquad\mbox{ if } x_{ij}(t)=1 \\
      R_i(s)\!+ \displaystyle  \sum_{s'\in S_i}P^{i0}_{ss'}V_{i,j}^{t+1}(s',\lambda_{ij}), \nonumber\\
      \qquad\qquad\qquad\qquad\qquad\mbox{ if } x_{ij}(t)=0 \\
   \end{cases}
\end{eqnarray}
\begin{equation}
    \label{def:decoupled_index}
    \lambda^\star_{ij}(s) = {\arg\min}\{\lambda : V^t_{i,j}(s,\lambda, j)==V^t_{i,j}(s, \lambda, 0)\}
\end{equation}

We compute the Whittle indices $\lambda_{ij}^\star$ (Eq.~\ref{def:decoupled_index})~\citep{qian2016restless} 
(the algorithm is in Appendix~\ref{sec:Qian}).

Additionally, we establish that the Whittle indices of multiple workers are related when the costs and transition probabilities possess certain characteristics, enabling simplification of Whittle Index computation for multiple workers when there are certain structures in the MWRMAB problem. 

\begin{theorem}\label{theorem:equal_transitions}
For an arm $i$, and a pair of workers $j$ and $j'$ such that $c_{ij}\neq c_{ij'}$ and $P_{ss'}^{ij}=P_{ss'}^{ij'}$ for every $s,s'\in\mathcal{S}_i$, then their Whittle Indices are inversely proportional to their costs.
\[\frac{\lambda^\star_{ij}(s)}{\lambda^\star_{ij'}(s)} = \frac{c_{ij'}}{c_{ij}}\mbox{ for each state }s\in\mathcal{S}_i\]
\end{theorem}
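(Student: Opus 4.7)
The plan is to reparametrize each per-worker subproblem by its \emph{effective charge} $m := \lambda_{ij}\, c_{ij}$, observe that after this substitution the two workers $j$ and $j'$ face identical subproblems whenever $P^{ij}=P^{ij'}$, and then translate the resulting equality of effective-charge indices back into the claimed ratio on $\lambda^\star_{ij}$ and $\lambda^\star_{ij'}$.

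First I would inspect the subproblem objective in Eq.~\ref{eq:RMAB_lagrange_sub} and the Bellman recursion for $V^t_{i,j}$ immediately preceding Eq.~\ref{def:decoupled_index}, and note that the multiplier $\lambda_{ij}$ enters only through the product $\lambda_{ij}\, c_{ij}$. Setting $m = \lambda_{ij}\, c_{ij}$, the active branch becomes $R_i(s) - m + \sum_{s'} P^{ij}_{ss'}\, V^{t+1}(s', m)$ and the passive branch becomes $R_i(s) + \sum_{s'} P^{i0}_{ss'}\, V^{t+1}(s', m)$, so the value function depends on the pair (worker $j$, multiplier $\lambda_{ij}$) only through the pair $(P^{ij}, m)$, not through $c_{ij}$ separately. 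I would then define an auxiliary scalar $m^\star_i(s)$ as the smallest $m$ at which these two branches agree at state $s$, parametrized by the (worker-independent) passive kernel $P^{i0}$, the reward $R_i$, and the active kernel $P^{ij}$.

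Applying the definition of $\lambda^\star_{ij}(s)$ from Eq.~\ref{def:decoupled_index} under this reparametrization yields $\lambda^\star_{ij}(s)\, c_{ij} = m^\star_i(s)$, and the identical argument applied to worker $j'$ gives $\lambda^\star_{ij'}(s)\, c_{ij'} = m^\star_i(s)$; here I crucially use $P^{ij}=P^{ij'}$ to conclude that the two workers see the \emph{same} auxiliary problem and hence produce the same $m^\star_i(s)$. Equating the two expressions and rearranging gives the claimed identity. The one subtle point, which I expect to be the main obstacle, is verifying that ``smallest $\lambda$'' on the original side and ``smallest $m$'' on the reparametrized side really correspond; this is immediate once one observes that $\lambda \mapsto \lambda\, c_{ij}$ is strictly monotone whenever $c_{ij}>0$, so the argmin is preserved, while indexability of each arm-worker pair (assumed whenever Whittle indices are invoked in the paper) ensures these argmins are well defined in the first place.
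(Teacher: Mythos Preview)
Your proposal is correct and follows essentially the same route as the paper: both substitute the effective charge $m = \lambda_{ij}c_{ij}$ (the paper writes $x = \lambda_j(s)c_{ij}$), observe that the resulting Bellman/indifference equation depends on the worker only through $P^{ij}$, and conclude that equal active kernels force $\lambda^\star_{ij}(s)c_{ij} = \lambda^\star_{ij'}(s)c_{ij'}$. Your version is in fact slightly more careful, since you explicitly justify that the $\arg\min$ is preserved under the strictly monotone map $\lambda \mapsto \lambda c_{ij}$ (assuming $c_{ij}>0$), a point the paper passes over.
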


\begin{proof}
Let us consider an arm $i$ and a pair of workers $j$ and $j'$ such that $P_{ss'}^{ij}=P_{ss'}^{ij'}$. By definition of Whittle Index $\lambda_j(s)$ for a worker $j$, it is the minimum value at a state $s$ such that, 
\begin{eqnarray}
V_{ij}(s, \lambda_j(s), j) - V_{ij}(s, \lambda_j(s), 0) = 0\label{eq:V_diff}
\end{eqnarray}
Eq.~\ref{eq:V_diff} can be rewritten by expanding the value functions as:
\begin{eqnarray}
&& R_i(s) - \lambda_j(s)c_{ij} +\displaystyle \sum_{s'\in \mathcal{S}_i} P_{ss'}^{ij} V_{i}(s', \lambda_j(s))\nonumber \\
&& \qquad- R_i(s) +\displaystyle \sum_{s'\in \mathcal{S}_i} P_{ss'}^{i0} V_{i}(s', \lambda_j(s)) = 0\nonumber \\
&\implies & -\lambda_{j}(s)c_{ij} + \displaystyle \sum_{s'\in \mathcal{S}_i} P_{ss'}^{ij} V_{i}(s', \lambda_j(s))\nonumber\\
&&\qquad\qquad-\displaystyle \sum_{s'\in \mathcal{S}_i} P_{ss'}^{i0} V_{i}(s', \lambda_{j}(s))=0 \label{eq:diff}
\end{eqnarray}
where, $V_{i}(s', \lambda_{j}(s')) = \underset{a=\{0,j\}}{\max}\ R_i(s) -a\lambda_{j}(s)c_{ij} + \mathbb{E}_{s''}[V_i(s'', \lambda(s))]$.

Next, we substitute all $\lambda_j(s)$ terms by $\frac{x}{{c_{ij}}}$. After substitution, Eq.~\ref{eq:diff} is a function of $x$ only, i.e., no $\lambda(s)$ or $c_{ij}$ terms remain after substitution. We can rewrite Eq.~\ref{eq:diff} as:
\begin{equation}
    -x + \displaystyle \sum_{s'\in \mathcal{S}_i} P_{ss'}^{ij} V_{i}(s', x) - \displaystyle \sum_{s'\in \mathcal{S}_i} P_{ss'}^{i0} V_{i}(s', x)=0\label{eq:substitute_x}
\end{equation}
Note that $x^*$ that minimizes Eq.~\ref{eq:substitute_x} corresponds to $\lambda_j(s)c_{ij}$ for any $j$, where $\lambda_j(s)$ is the Whittle index for worker $j$. Therefore, for any two workers $j$ and $j'$ with corresponding Whittle Indices as $\lambda_{j}(s)$ and $\lambda_{j'}(s)$, we obtain $\lambda_j(s)c_{ij} = \lambda_{j'}(s)c_{ij'}$ whenever $P_{ss'}^{ij}=P_{ss'}^{ij'}$.   
This completes the proof. 
\end{proof}

Theorem~\ref{theorem:equal_transitions} also implies that, when the costs and effectiveness of two workers are equal, then their Whittle indices are also equal, stated formally in Corollary~\ref{cor:equal_indices}.
\begin{corollary}\label{cor:equal_indices}
For an arm $i$, and a pair of workers $j$ and $j'$ such that $c_{ij}=c_{ij'}$ and $P_{ss'}^{ij}=P_{ss'}^{ij'}$ for every $s,s'\in\mathcal{S}_i$, then their Whittle Indices are the same.
\[\lambda^\star_{ij}(s)=\lambda^\star_{ij'}(s)\mbox{ for each state }s\in\mathcal{S}_i.\]
\end{corollary}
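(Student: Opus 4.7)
The plan is to derive the corollary as an immediate consequence of Theorem~\ref{theorem:equal_transitions}. Inspecting the proof of that theorem, the substitution $\lambda_j(s) \mapsto x / c_{ij}$ reduces the Whittle indifference equation to Eq.~\ref{eq:substitute_x}, which depends only on the transition kernel (through $P_{ss'}^{ij}$ and $P_{ss'}^{i0}$) and on the common reward function $R_i$, but not on $c_{ij}$ itself. The key observation I would use is that the hypothesis $c_{ij}\neq c_{ij'}$ in Theorem~\ref{theorem:equal_transitions} is never invoked in the derivation; it only serves to make the ratio statement non-trivial. Hence the intermediate identity $\lambda^\star_{ij}(s)\,c_{ij} = \lambda^\star_{ij'}(s)\,c_{ij'}$ actually holds for \emph{any} pair of workers $j,j'$ satisfying $P_{ss'}^{ij}=P_{ss'}^{ij'}$ for all $s,s'\in\mathcal{S}_i$.

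Given this, the proof is a single line: substituting the additional hypothesis $c_{ij}=c_{ij'}$ into $\lambda^\star_{ij}(s)\,c_{ij} = \lambda^\star_{ij'}(s)\,c_{ij'}$ and dividing both sides by the common positive cost yields $\lambda^\star_{ij}(s)=\lambda^\star_{ij'}(s)$ for every state $s\in\mathcal{S}_i$. No further dynamic-programming or indexability argument is required, since the minimal-$\lambda$ definition of the Whittle index from Eq.~\ref{def:decoupled_index} has already been absorbed into the corresponding statement for Theorem~\ref{theorem:equal_transitions}.

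The only mild subtlety, and what I would flag explicitly in the write-up, is the cosmetic mismatch between the hypothesis of Theorem~\ref{theorem:equal_transitions} ($c_{ij}\neq c_{ij'}$) and the hypothesis of the corollary ($c_{ij}=c_{ij'}$). To avoid appearing to apply a theorem outside its stated range, I would either (i) phrase the corollary's proof as an appeal to the intermediate identity established \emph{inside} the proof of Theorem~\ref{theorem:equal_transitions} rather than to its final statement, or (ii) note that the argument of that theorem goes through verbatim without the inequality assumption. Either route is essentially trivial, so I do not expect any genuine obstacle; the main care is purely in citation hygiene.
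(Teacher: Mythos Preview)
Your proposal is correct and matches the paper's approach: the paper states the corollary as an immediate consequence of Theorem~\ref{theorem:equal_transitions} without giving a separate proof, and your observation that the intermediate identity $\lambda^\star_{ij}(s)\,c_{ij} = \lambda^\star_{ij'}(s)\,c_{ij'}$ from that proof does not rely on the hypothesis $c_{ij}\neq c_{ij'}$ is exactly the right justification. Your note on citation hygiene is a nice touch that the paper itself glosses over.
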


\subsection{Adjusting for interaction effects}
\label{sec:adjusted_index}

The indices obtained using Alg.~\ref{alg:WI} are not indicative of the true long-term value of taking that action in the MWRMAB problem. This is because, for a given arm, the value of an intervention by worker $j$ in general depends on interventions by other workers $j^\prime$ at different timesteps. 

Consider a MWRMAB corresponding to an anti-poaching patrol planning problem with 2 workers, where each worker is a type of ``specialist'' with different equipment (detailed in Fig.~\ref{fig:decoupled_counterexample}). 
    \begin{figure}[!h]
    \centering
    \includegraphics[width=0.9\columnwidth]{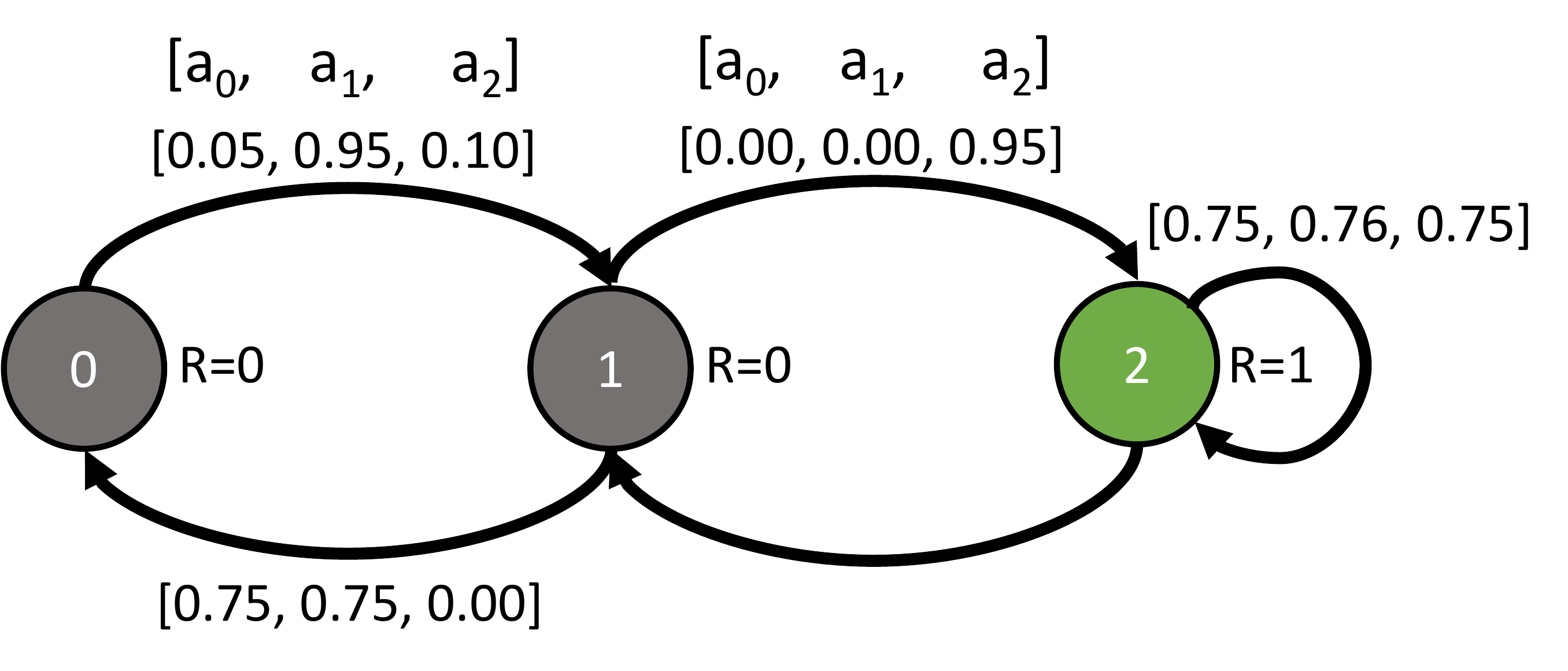}
    \caption{Specialist domain: where specific actions are required in each state to advance to the reward-giving state. Decoupled indices lead to sub-optimal policies, whereas adjusted indices perform well.}
    \label{fig:decoupled_counterexample}
    \end{figure}

The first ranger (worker), $a_1$, has special equipment for clearing overgrown brush, and the second ranger, $a_2$, has specialized equipment for detecting snares, e.g., a metal detector. Assume $3$ states for each patrol area $i$ as ``overgrown and snared'' ($s=0$), ``clear and snared'' ($s=1$), and ``clear and not snared'' ($s=2$). Assume that reward is received only for arms in state $s=2$, and that snares cannot be cleared from areas with overgrown brush, i.e., $P^{02}_{ij}=0\ \forall j \in [M]$. If we assume that each worker is a ``true'' specialist--- so, ranger 1's equipment is ineffective at detecting snares, i.e., $P^{12}_{i1}=0$, and ranger 2's equipment is ineffective at clearing overgrown brush, i.e., $P^{01}_{i2}=0$ --- then the optimal policy is for ranger 1 to act on the arm in state ``overgrown and snared'' and ranger 2 to act on the arm in state ``clear and snared''. However, the fully decoupled index computation for each ranger $j$ would reason about restricted MDPs that only have passive action and ranger type $j$ available. So when computing, e.g., the index for ranger 1 in $s=0$, the restricted MDP would have $0$ probability of reaching state ``clear and not snared'', since it does not include ranger 2 in its restricted MDP. This would correspond to an MDP that always gives 0 reward, and thus would artificially force the index for ranger 1 to be $0$, despite ranger 1 being the optimal action for $s=0$.

To address this, we define a new index notion that accounts for such inter-action effects. The key idea is that, when computing the index for a given worker, we will consider actions of \textit{all other workers in future time steps.} So in our poaching example, the new index value for ranger 1 in $s=0$ will \textit{increase} compared to its decoupled index value, because the new index will take into account the value of ranger 2's actions when the system progresses to $s=1$ in the future. Note that the methods we build generalize to any number of workers $M$. However, the manner in which we incorporate the actions of other workers must be done carefully, We propose an approach and provide theoretical results explaining why. Finally, we give the full algorithm for computing the new indices.

\textbf{New index notion}: For a given arm, to account for the inter-worker action effects, we define the new index for an action $j$ as the minimum charge that makes an intervention by $j$ on that arm as valuable as \textit{any} other worker $j^\prime$ in the combined MDP, with $M+1$ actions. That is, we seek the minimum charge for action $j$ that makes us indifferent between taking action $j$ and \textit{not} taking action $j$, a multi-worker extension Whittle's index notion. 
To capture this, we define an augmented reward function $R^{\dagger}_{\bm{\lambda}}(s,j) = R(s) - \lambda_{j} c_{{j}}$. Let $\bm{\lambda}$ be the vector of $\{\lambda_{j}\}_{j\in[M]}$ charges. We define this \textbf{expanded MDP} as $\mathcal{M}^{\dagger}_{\bm{\lambda}}$ 
and the corresponding value function as $V^{\dagger}_{\bm{\lambda}}$. We now find adjusted index $\lambda^{adj,*}_{j,\bm{\lambda}_{-j}}$ using the following expression: 
\begin{align}
    \label{def:adjusted_index}
    \min_{j'\in[M]\setminus\{j\}}\arg\min_{\lambda_{j}}\{\lambda_j\!\!:\!\!\ V^{\dagger}_{\bm{\lambda}_{-j}}(s,\lambda_j,j)=V^{\dagger}_{\bm{\lambda}_{-j}}(s,\lambda_j,j^\prime)\}
\end{align}
where $\bm{\lambda}_{-j}$ is a vector of fixed charges for all  $j^\prime \ne j$, and the outer $\min$ over $j^\prime$ simply captures the specific action $j^\prime$ that the optimal planner is indifferent to taking over action $j$ at the new index value. 
Note, this is the natural extension of the decoupled two-action index definition, Eq.~(\ref{def:decoupled_index}), which defines the index as the charge on $j$ that makes the planner indifferent between acting and, the only other option, being passive.
Our new \emph{adjusted index algorithm} is given in Alg.~\ref{alg:AIC}.

\begin{algorithm}[!h]
\small{
\caption{Adjusted Index Computation}\label{alg:AIC}
{\textbf{Input}: An arm: MDP $\mathcal{M}^{\dagger}$, costs $c_{j}$, state $s$, and indices $\lambda^{*}_{j}(s).\qquad\qquad$} 
\begin{algorithmic}[1] 
\FOR{$j=1$ to $M$}
\STATE $\bm{\lambda}_{j} = \lambda^{*}_{j}(s)$ \COMMENT{init $\bm{\lambda}$}
\ENDFOR
\FOR{$j=1$ to $M$}
\STATE Compute $\lambda^{adj,*}_{j,\bm{\lambda_{-j}}}(s)$ \COMMENT{via binary search on Eq.~\ref{def:adjusted_index}}
\ENDFOR
\STATE \textbf{return} $\lambda^{adj,*}_{j,\bm{\lambda_{-j}}}(s)$ for all workers $j\in[M]$
\end{algorithmic}
}
\end{algorithm}

We use a binary search procedure to compute the adjusted indices since $V^{\dagger}_{\bm{\lambda}_{-j}}(s,\lambda_j,j)$ is convex in $\lambda_j$. The most important consideration of the adjusted index computation is how to set the charges $\lambda_{j^\prime}$ of the other action types $j^\prime$ when computing the index for action $j$. We show that a reasonable choice for $\lambda_{j^\prime}$ is the Whittle Indices $\lambda^{*}_{j^\prime}(s)$ which were pre-computed using Alg.~\ref{alg:WI}. The intuition is that $\lambda^{*}_{j^\prime}(s)$ provides a \textit{lower bound} on how valuable the given action $j^\prime$ is, since it was computed against no-action in the restricted two-action MDP. In Observation~\ref{obs} and Theorem~\ref{thm:lam_0}, we describe the problem's structure to motivate these choices.

The following observation explicitly connects decoupled indices and adjusted indices.
\begin{observation}\label{obs}
For each worker $j$, when  $\bm{\lambda}_{-j}\rightarrow{}\infty$, i.e., $\lambda_{j^\prime} \rightarrow{}\infty \hspace{2mm} \forall j^\prime \ne j$, then the following holds: 
$\lambda^{adj,*}_{j,\bm{\lambda}_{-j}} \rightarrow{} \lambda^{*}_{j}$.
\end{observation}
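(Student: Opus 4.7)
The plan is to show that, in the stated limit, the expanded MDP $\mathcal{M}^{\dagger}_{\bm{\lambda}}$ effectively collapses onto the two-action restricted MDP with actions $\{0,j\}$ that underlies the decoupled Whittle index $\lambda^{*}_{j}$. The intuition is immediate: sending $\lambda_{j'}\to\infty$ with $c_{ij'}>0$ drives the augmented per-step reward $R^{\dagger}_{\bm{\lambda}}(s,j')=R_i(s)-\lambda_{j'}c_{ij'}$ to $-\infty$, so any optimal policy in $\mathcal{M}^{\dagger}_{\bm{\lambda}}$ refuses to use any worker $j'\neq j$, leaving the passive action and worker $j$ as the only effectively available choices.

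First, I would formalize this degeneracy at the level of value functions. A single Bellman unrolling gives
\[V^{\dagger}_{\bm{\lambda}_{-j}}(s,\lambda_j,j') = R_i(s) - \lambda_{j'} c_{ij'} + \sum_{s'\in S_i} P^{ij'}_{ss'}\, V^{\dagger}_{\bm{\lambda}_{-j}}(s',\lambda_j),\]
where the continuation term is uniformly bounded above (independently of $\bm{\lambda}_{-j}$) by the optimal value of the $\{0,j\}$-restricted MDP. Hence $V^{\dagger}_{\bm{\lambda}_{-j}}(s,\lambda_j,j')\to -\infty$ for every $j'\neq j$ with $c_{ij'}>0$, while $V^{\dagger}_{\bm{\lambda}_{-j}}(s,\lambda_j,j)$ and $V^{\dagger}_{\bm{\lambda}_{-j}}(s,\lambda_j,0)$ both converge pointwise to the state-action values of the restricted MDP used to define $\lambda^{*}_{j}$ in Eq.~(\ref{def:decoupled_index}).

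Next, I would evaluate the inner argmin in Eq.~(\ref{def:adjusted_index}) separately for each competitor $j'$ and then apply the outer $\min$. For every worker $j'\neq j$ with $c_{ij'}>0$, the right-hand side of the indifference equation $V^{\dagger}(s,\lambda_j,j)=V^{\dagger}(s,\lambda_j,j')$ diverges to $-\infty$, so the minimal $\lambda_j$ satisfying it is forced to diverge as well. For the passive competitor $j'=0$---which the discussion preceding Eq.~(\ref{def:adjusted_index}) explicitly treats as part of the outer minimization, being the multi-worker analogue of Whittle's ``not-taking-action-$j$'' option---the indifference equation converges to $V^{\dagger}(s,\lambda_j,j)=V^{\dagger}(s,\lambda_j,0)$ in the restricted $\{0,j\}$-MDP, which is precisely the defining condition of $\lambda^{*}_{j}(s)$ in Eq.~(\ref{def:decoupled_index}). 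Consequently the outer $\min$ is attained at $j'=0$ and equals $\lambda^{*}_{j}(s)$ in the limit.

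The main obstacle I anticipate is justifying the limit--argmin interchange, since $\lambda^{adj,*}_{j,\bm{\lambda}_{-j}}$ is defined implicitly as a root of an indifference equation whose coefficients depend on $\bm{\lambda}_{-j}$. I would handle this with standard finite-state MDP continuity: each $V^{\dagger}_{\bm{\lambda}_{-j}}(s,\lambda_j,a)$ is piecewise linear and convex in $\lambda_j$, with a monotone non-increasing continuation component, so the indifference condition at $j'=0$ defines a continuous implicit function of $\bm{\lambda}_{-j}$; the pointwise value-function convergence established above then transfers cleanly to convergence of the minimizer $\lambda^{adj,*}_{j,\bm{\lambda}_{-j}}\to\lambda^{*}_{j}$.
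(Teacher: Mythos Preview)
Your proposal is correct and takes essentially the same approach as the paper: both argue that sending $\lambda_{j'}\to\infty$ drives the augmented reward $R^{\dagger}_{\bm{\lambda}}(s,j')=R(s)-\lambda_{j'}c_{j'}\to-\infty$, so every action $j'\neq j$ becomes undesirable in the optimal policy and the expanded MDP collapses to the two-action $\{0,j\}$ restricted MDP defining $\lambda^{*}_{j}$. The paper offers only this one-paragraph informal justification; your version is more carefully elaborated---analyzing each competitor in the outer minimization of Eq.~(\ref{def:adjusted_index}) separately and addressing the limit--argmin interchange---but the underlying mechanism is identical.
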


This can be seen by considering the rewards $R^{\dagger}_{\bm{\lambda}}(s,j^\prime) = R(s) - \lambda_{j^\prime} c_{{j^\prime}}$ for taking action $j^\prime$ in any state $s$. As the charge $\lambda_{j^\prime}\rightarrow{}\infty$, $R^{\dagger}_{\bm{\lambda}}(s,j^\prime)\rightarrow{}-\infty$, making it undesirable to take action $j^\prime$ in the optimal policy. Thus, the optimal policy would only consider actions $\{0,j\}$, which reduces to the restricted MDP of the decoupled index computation.

Next we analyze a potential naive choice for $\bm{\lambda}_{-j}$ when computing the indices for each $j$, namely, $\bm{\lambda}_{-j} = 0$. Though it may seem a natural heuristic, this corresponds to planning \textit{without considering the costs of other actions}, which we show below can lead to arbitrarily low values of the indices, which subsequently can lead to poorly performing policies.
\begin{theorem}
\label{thm:lam_0}
As $\lambda_{j^\prime} \rightarrow{} 0 \hspace{2mm} \forall j^\prime \ne j$, $\lambda^{adj,*}_j$ will monotonically decrease, if (1) $V_{\lambda_{j^\prime}}^{\dagger}(s,\lambda_j,j^\prime) \ge V_{\lambda_{j^\prime}}^{\dagger}(s,\lambda_j,0)$ for 0 $\le \lambda_{j^\prime} \le \epsilon$ and (2) if the average cost of worker $j^\prime$ under the optimal policy starting with action $j^\prime$ is greater than the average cost of worker $j^\prime$ under the optimal policy starting with action $j$.
\end{theorem}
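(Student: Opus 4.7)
The plan is to apply the implicit function theorem to the defining equation of $\lambda^{adj,*}_j$ in Eq.~(\ref{def:adjusted_index}) and to sign the resulting partial derivatives via an envelope-theorem identity. Fix any $j' \in [M] \setminus \{j\}$ that attains the outer minimum at the current $\bm{\lambda}_{-j}$, and define
\[\Delta(\lambda_j,\bm{\lambda}_{-j}) := V^{\dagger}_{\bm{\lambda}_{-j}}(s,\lambda_j,j) - V^{\dagger}_{\bm{\lambda}_{-j}}(s,\lambda_j,j').\]
By construction, $\Delta = 0$ at $\lambda_j = \lambda^{adj,*}_j$. Condition~(1), $V^{\dagger}(s,\lambda_j,j') \ge V^{\dagger}(s,\lambda_j,0)$ for $\lambda_{j'} \in [0,\epsilon]$, guarantees that the $j'$-comparison remains the operative one (i.e., it does not collapse into a degenerate comparison with the no-intervention action) throughout the perturbation.

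The heart of the proof is the standard MDP envelope identity: for any worker $a$,
\[\frac{\partial V^{\dagger}_{\bm{\lambda}}(s,\lambda_j,a_0)}{\partial \lambda_a} = -c_a \, N_{a \mid s, a_0},\]
where $N_{a\mid s,a_0}$ denotes the expected (discounted) number of times action $a$ is taken under the optimal continuation, starting at state $s$ with the initial action $a_0$ forced. Applying this to $\Delta$ gives
\[\frac{\partial \Delta}{\partial \lambda_{j'}} = -c_{j'}(N_{j'\mid s,j} - N_{j'\mid s,j'}), \qquad \frac{\partial \Delta}{\partial \lambda_j} = -c_j(N_{j\mid s,j} - N_{j\mid s,j'}).\]
Condition~(2) rewrites as $c_{j'} N_{j'\mid s,j'} > c_{j'} N_{j'\mid s,j}$, which yields $\partial \Delta/\partial \lambda_{j'} > 0$. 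For $\partial \Delta / \partial \lambda_j$, I would argue it is strictly negative at the crossing: because $\lambda^{adj,*}_j$ is the smallest root of $\Delta(\cdot,\bm{\lambda}_{-j}) = 0$, and $\Delta$ is positive for $\lambda_j$ slightly below (where action $j$ is preferred) and negative slightly above (where $j'$ is preferred), $\Delta$ must cross zero downward, forcing $N_{j\mid s,j} > N_{j\mid s,j'}$. The implicit function theorem then delivers
\[\frac{d\lambda^{adj,*}_j}{d\lambda_{j'}} = -\frac{\partial \Delta/\partial \lambda_{j'}}{\partial \Delta/\partial \lambda_j} > 0,\]
so $\lambda^{adj,*}_j$ moves in the same direction as $\lambda_{j'}$. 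Applying this coordinatewise over $j' \ne j$ and letting each $\lambda_{j'} \downarrow 0$ yields the claimed monotone decrease.

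The main obstacle is handling discontinuities in the identity of the binding minimizer $j'$ in Eq.~(\ref{def:adjusted_index}) as $\bm{\lambda}_{-j}$ varies: the differentiation above is only valid in a regime where a single $j'$ attains the outer minimum. I would patch this by writing $\lambda^{adj,*}_j$ as the pointwise minimum over $j' \in [M] \setminus \{j\}$ of single-comparison crossing functions $\lambda^{(j')}_j(\bm{\lambda}_{-j})$, each of which is monotone by the coordinatewise argument; a pointwise minimum of continuous monotone functions is monotone in the same direction, so monotonicity persists along the trajectory $\bm{\lambda}_{-j} \downarrow 0$ even when the identity of the minimizer switches. A secondary subtlety is verifying differentiability where the Bellman operator has ties between actions, which can be handled by a right/left-derivative argument using the standard piecewise-analytic structure of MDP value functions in $\bm{\lambda}$.
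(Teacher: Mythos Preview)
Your argument is correct and rests on the same key observation as the paper: condition~(2) controls the relative sensitivities of $V^{\dagger}(s,\lambda_j,j)$ and $V^{\dagger}(s,\lambda_j,j')$ to $\lambda_{j'}$, which in turn determines how the crossing point $\lambda^{adj,*}_j$ moves as $\lambda_{j'}\downarrow 0$. The paper phrases this geometrically---both value curves (viewed as functions of $\lambda_j$) shift upward when $\lambda_{j'}$ decreases, with the $j'$-curve shifting by more, and then invokes convexity and monotonicity in $\lambda_j$ to conclude the intersection moves left. You instead apply the implicit function theorem to $\Delta=0$, use the envelope identity $\partial V^{\dagger}/\partial\lambda_a = -c_a N_{a\mid s,a_0}$ to sign the partials, and deduce $d\lambda^{adj,*}_j/d\lambda_{j'}>0$ directly. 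These are the calculus and geometric faces of the same argument.

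Where your write-up adds value is in the bookkeeping the paper leaves implicit: you explicitly justify the sign of $\partial\Delta/\partial\lambda_j$ at the crossing via the ``smallest root is a downward crossing'' observation (the paper appeals to convexity and the ordering $V^{\dagger}(j)>V^{\dagger}(j')$ for $\lambda_j<\lambda^{adj,*}_j$ for the analogous step), and you handle switches in the binding $j'$ by writing $\lambda^{adj,*}_j$ as a pointwise minimum of single-comparison crossing functions, each monotone. The paper instead uses condition~(1) only to assert that the same $j'$ remains next-best over the interval $[0,\epsilon]$. Your treatment is slightly more careful on these points, but neither changes the substance of the proof.
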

Thm.~\ref{thm:lam_0} 
(proof in Appendix~\ref{sec:proof}) 
confirms that, although setting $\lambda_{j^\prime}=0$ for all $j^\prime$ may seem like a natural option, in many cases it will artificially reduce the index value for action $j$. This is because $\lambda_{j^\prime}=0$ corresponds to planning as if action $j^\prime$ comes with \textit{no charge}. Naturally then, as we try to determine the \textit{non-zero} charge $\lambda_j$ we are willing to pay for action $j$, i.e., the index of action $j$, \textit{we will be less willing to pay higher charges, since there are free actions $j^\prime$}. Note that conditions (1) and (2) of the above proof are not restrictive. The first is a common epsilon-neighborhood condition, which requires that value functions do not change in arbitrarily non-smooth ways with $\lambda$ values near 0. The second requires that a policy's accumulated costs of action $j^\prime$ are greater when starting with action $j^\prime$, than starting from any other action--- this is same as assuming that the MDPs do not have arbitrarily long mixing times. That is to say that Thm.~\ref{thm:lam_0} applies to a wide range of problems that we care about.

The key question then is: what are reasonable values of charges for other actions $\bm{\lambda}_{-j}$, when computing the index for action $j$? We propose that a good choice is to set each $\lambda_{j^\prime} \in \bm{\lambda}_{-j}$ to its corresponding decoupled index value for the current state, i.e., $\lambda^{*}_{j^\prime}(s)$. 
The reason relies on the following key idea: we know that at charge $\lambda^{*}_{j^\prime}(s)$, the optimal policy is indifferent between choosing that action $j^\prime$ and the passive action, at least when $j^\prime$ is the only action available. Now, assume we are computing the new adjusted index for action $j$, when combined in planning with the aforementioned action $j^\prime$ at charge $\lambda^{*}_{j^\prime}(s)$. Since the charge for $j^\prime$ is already set at a level that makes the planner indifferent between $j^\prime$ and being passive, if adding $j^\prime$ to the planning space with $j$ does not provide any additional benefit over the passive action, \textit{then the new adjusted index for $j$ will be the same as the decoupled index for $j$, which only planned with $j$ and the passive action}. This avoids the undesirable effect of getting artificially reduced indices due to under-charging for other actions $j^\prime$, i.e., Thm.~\ref{thm:lam_0}. The ideas follow similarly for whether the adjusted index for $j$ should increase or decrease relative to its decoupled index value. I.e., if \textit{higher} reward can be achieved when planning with $j$ and $j^\prime$ together compared to planning with either action alone, as in the specialist anti-poaching example
then we will become \textit{more willing to pay a charge $\lambda_j$} now to help reach states where the action $j^\prime$ will let us achieve that higher reward. On the other hand, if $j^\prime$ dominates $j$ in terms of intervention effect, then even at a reasonable charge for $j^\prime$, we will be less willing to pay for action $j$ when both options are available, and so the adjusted index will decrease. 
We give our new \emph{adjusted index algorithm} in Alg.~\ref{alg:AIC}, and provide experimental results demonstrating its effectiveness.

\subsection{Allocation Algorithm}\label{sec:allocation}
We provide a method called \textit{Balanced Allocation} (Alg.~\ref{alg:WFA}) to tackle the problem of allocating intervention tasks to each worker in a balanced way. At each time step, given the current states of all the arms $\{s_i^t\}_{i\in[N]}$, Alg.~\ref{alg:WFA} creates an ordered list $\sigma$ among workers based on their highest Whittle Indices $\displaystyle \max_i\lambda_{ij}(s_i^t)$. It then allocates the best possible (in terms of Whittle Indices) available arm to each worker according to the order $\sigma$ in a round-robin way (allocate one arm to a worker and move on to the next worker until the stopping criterion is met). Note that this satisfies the constraint that the same arm cannot be allocated to more than one worker. In situations where the best possible available arm leads to the budget violation $B$, an attempt is made to allocate the next best. This process is repeated until there are no more arms left to be allocated. If no available arms could be allocated to a worker $j$ because of budget violation, then worker $j$ is removed from the future round-robin allocations and are allocated all the arms in their bundle $D_j$. Thus, the budget constraints are always satisfied. Moreover, in the simple setting, when costs and transition probabilities of all workers are equal, this heuristic obtain optimal reward and perfect fairness.  

\begin{algorithm}[h!]
\small{
\caption{Balanced Allocation}
\label{alg:WFA}
\textbf{Input}: Current states of each arm $\{s_i\}_{i\in[N]}$, index values for each ($i,j$) arm-worker pair $\lambda_{ij}(s_i)$, costs $\{c_{ij}\}$, budget $B\qquad\qquad$\\
\textbf{Output}: balanced allocation $\{D_j\}_{j\in[M]}$ where $D_j\subseteq[N]$, $D_j\cap D_{j'}=\emptyset$ $\forall j,j'\in[M].\qquad\qquad\qquad\qquad\qquad\qquad\qquad\qquad\quad$

\begin{algorithmic}[1] 
\STATE Initiate allocation $D_j\leftarrow \emptyset$ for all $j\in[M]$

\STATE Let $L\leftarrow \{1,\ldots, N\}$ be the set of all unallocated arms

\WHILE{true}
\STATE Let $\tau_j$ be the ordering over $\lambda_{ij}$ values from highest to lowest: $\lambda[\tau_j[1]][j]\geq \ldots\geq \lambda[\tau_{j}[N]][j]\geq 0$
\STATE Let $\sigma$ be the ordering over workers based on their highest indices: $\lambda[\tau_1[1]][1]\geq \lambda[\tau_2[1])][2]$ and so on
    \FOR{$j=1$ to $M$}
        \IF {$\tau_{\sigma_j}\cap L \neq\emptyset$} 
            \STATE $x\leftarrow \mathrm{top}(\tau_j)\cap L $
            \WHILE {$c_{x{\sigma_j}} + \sum_{h\in D_{\sigma_j}} c_{h{\sigma_j}} > B$}
                \STATE $\tau_{\sigma_j}\leftarrow\tau_{\sigma_j}\setminus\{x\}$
                \IF {$\tau_{\sigma_j}\cap L=\emptyset$}
                    \STATE \texttt{break}
                    \ELSE 
                    \STATE $x\leftarrow \mathrm{top}(\tau_{\sigma_j})\cap L$
                \ENDIF
            \ENDWHILE
            \IF{$\tau_{\sigma_j}\cap L \neq\emptyset$}
                \STATE $D_{\sigma_j} \leftarrow D_{\sigma_j}\cup \{x\}$; $\ \ \ L \leftarrow L\setminus \{x\}$;  $\ \ \ \tau_{\sigma_j}\leftarrow\tau_{\sigma_j}\setminus\{x\}$
            \ENDIF
        \ENDIF
    \ENDFOR
\ENDWHILE

\STATE \textbf{return} $\{D_j\}_{j\in[M]}$
\end{algorithmic}
}
\end{algorithm}

\begin{theorem}
When all workers are homogeneous (same costs and transition probabilities on arms after intervention) and satisfy indexability, then our framework outputs the optimal policy while being exactly fair to the workers. 
\end{theorem}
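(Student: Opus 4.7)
My plan is to exploit the symmetry induced by worker homogeneity at every layer of the framework and then lift classical Whittle indexability to the multi-worker setting. First, I would apply Corollary~\ref{cor:equal_indices} directly: since $c_{ij}=c_{ij'}$ and $P^{ij}_{ss'}=P^{ij'}_{ss'}$ for every pair $(j,j')$, the decoupled indices computed in Step~1 collapse to a common value $\lambda^\star_i(s)$ independent of the worker label. Next, I would argue that the adjusted indices of Section~\ref{sec:adjusted_index} inherit the same symmetry: the expanded MDP $\mathcal{M}^{\dagger}_{\bm{\lambda}}$ is invariant under permutations of the $M$ active action labels because all workers contribute identical rewards, costs, and transitions, so solving Eq.~(\ref{def:adjusted_index}) returns a common value $\lambda^{adj,*}_i(s)$ for every $j$. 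This collapses the index matrix fed into Alg.~\ref{alg:WFA} into a single per-arm vector.

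Second, I would analyse Alg.~\ref{alg:WFA} under this symmetry. Because the index rows are identical, the per-worker ordering $\tau_j$ is the same ordering $\tau$ for every $j$, and the outer ordering $\sigma$ can be taken to be the identity. The round-robin loop then allocates arms in rotation over $\tau$, and every worker faces identical per-arm costs and budgets, so they skip the same arms for feasibility reasons. Consequently, after each complete rotation every worker's bundle $D_j$ differs in size from every other's by at most one, and once the aggregate budget saturates the stopping criterion fires simultaneously for every worker. Under arm-uniform costs this yields $\overline{C}^\pi_j(t)=\overline{C}^\pi_{j'}(t)$ for all pairs $(j,j')$, which establishes the "exactly fair" portion of the claim.

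For optimality, I would observe that worker identity affects neither the immediate reward nor the transition kernel, so the MWRMAB collapses to a classical RMAB whose Lagrangian relaxation has an aggregated budget $MB$. Indexability is assumed, so the standard Whittle argument gives asymptotic optimality of the decreasing-index policy on the relaxed problem. Because Alg.~\ref{alg:WFA} under symmetry selects arms strictly in decreasing order of the common index $\lambda^{adj,*}_i(s_i^t)$ and the balanced allocation verified in the previous step certifies per-worker feasibility, the multiset of active arms at each time step coincides with what the classical index policy would pick from the aggregated-budget relaxation, so the reward stream is identical in distribution to that of the asymptotically optimal Whittle policy.

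The step I expect to be the main obstacle is formalising the equivalence between per-worker feasibility and the aggregated-budget problem when per-arm costs $c_i$ vary across arms: a centralised policy could pack nearly $MB$ of cost into a single worker, whereas round-robin must spread cost and may be forced to drop an arm that the aggregate policy keeps. I would handle this with an interchange argument leveraging worker symmetry --- any arm dropped at worker $j$ due to a budget violation can, by homogeneity of both costs and transitions, be reassigned to any worker with spare budget without changing the reward or the fairness gap --- and then verify that such a reassignment is always available when the top-indexed arms are feasible for the aggregated relaxation. This swap argument is also where "exactly fair" must be read carefully: it identifies the precise structural condition (arm-uniform cost, or a divisibility condition between $B$ and the costs of the selected arms) under which the $\max-\min$ gap in Eq.~(\ref{eq:RMAB_LP}) collapses to zero rather than merely to the per-allocation bound $c^m$.
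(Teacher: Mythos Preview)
Your proposal follows essentially the same route as the paper: collapse the worker dimension via Corollary~\ref{cor:equal_indices}, reduce to a single-action RMAB, invoke indexability for optimality of the decreasing-index rule, and read fairness off the round-robin structure of Alg.~\ref{alg:WFA}. The paper does all of this too, so your plan is sound.

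One worthwhile difference: the paper tacitly strengthens ``homogeneous workers'' to a \emph{single} common cost $c$ across all arms (not merely $c_{ij}=c_{ij'}$ per arm). With that, the aggregated budget becomes the integer count $M\lfloor B/c\rfloor$ of arms, each worker receives either $\lfloor B/c\rfloor$ or $\lfloor N/M\rfloor$ arms, and both feasibility and fairness are immediate arithmetic. This cleanly sidesteps the bin-packing obstacle you flagged in your last paragraph --- the interchange argument you sketch is unnecessary under the paper's reading. Your treatment is more general (allowing arm-dependent $c_i$) and you correctly isolate where a gap would appear; the paper's treatment is simpler but rests on the stronger uniform-cost hypothesis. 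Your extra care on the adjusted-index symmetry (Step~2) is also absent from the paper's proof, which does not revisit Step~2 at all; your permutation-invariance argument for $\mathcal{M}^{\dagger}_{\bm{\lambda}}$ is the right way to fill that in. Finally, note that both you and the paper interpret ``exactly fair'' as satisfying the constraint with gap at most $c$ (not zero) in the $N<M\lfloor B/c\rfloor$ regime, so your caveat at the end is on point.
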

The proof consists of two components: (1) optimality, which can be proved using Corollary~\ref{cor:equal_indices} (Whittle Indices for homogeneous workers are the same), and the fact that the same costs lead to considering all workers from the same pool of actions, and (2) perfect fairness, using the fact that, when costs are equal, Step 3 of our algorithm divides the arms among workers in a way such that the difference between the number of allocations between two workers differs by at most 1. First we define the technical condition, called \textit{indexability}, under which choosing top arms according to Whittle indices results in an optimal RMAB solution. 
\begin{definition}
Let $\Phi(\lambda)$ be the set of all states for which it is optimal to take a \textit{passive} action over an \textit{active} action that with per-unit $\lambda$ charge. An arm is called \textit{indexable} if $\Phi(\lambda)$ monotonically increases from $\emptyset$ to $\mathcal{S}_i$ when $\lambda$ increases from $-\infty$ to $+\infty$. An RMAB problem is \textit{indexable} if all the arms are indexable.
\end{definition}

\begin{proof}
Consider an MWRMAB problem instance with $N$ arms, $M$ homogeneous workers with costs $c$, and per-worker per-round budget $B$. Upon relaxing the per-worker budget constraint, this MWRMAB problem reduces to an RMAB instance with $N$ arms, $2$ actions (\textit{intervention} action with cost $1$ or \textit{no-intervention} action with cost $o$), and a total per-round budget of $M\lfloor B/c\rfloor$. Under \textit{indexability} assumption, this problem can be solved using Whittle index policy~\cite{whittle1988restless}, wh---selecting $M\lfloor B/c\rfloor$ arms with highest Whittle indices $\lambda_i(s)$. Allocating the selected arms among all the workers, using our algorithm, ensures two properties:
\begin{itemize}[leftmargin=*]
    \item \textit{The per-worker budget $B$ is met:} The total cost incurred to intervene $M\lfloor B/c\rfloor$ selected arms of the RMAB solution is $cM\lfloor B/c\rfloor$. However,
    \[cM\lfloor B/c\rfloor \quad \leq\quad cM B/c \quad= \quad MB. \]
    Allocating these indivisible arms equally among all the workers would ensure that each worker incurs at most a cost of $B$. 
    \item \textit{Perfect fairness is achieved:} When $N\geq M\lfloor B/c\rfloor$, our algorithm distributes $M\lfloor B/c\rfloor$ arms among $M$ workers, such that each worker receives exactly $\lfloor B/c\rfloor$ interventions. In the case when $N < M\lfloor B/c\rfloor$, then, our algorithm allocates $\lfloor N/M\rfloor +1 $ arms to each of the first $(N-\lfloor N/M\rfloor M)$ workers, and $\lfloor N/M\rfloor$ arms to the rest of the workers. Thus, the difference between the allocations between any two workers in any round is at most $1$, implying that the difference between the costs incurred is at most $c$. This satisfies our fairness criteria. 
\end{itemize}
This completes the proof. 
\end{proof}

\begin{figure*}[!h]
    \centering
    \includegraphics[width=\textwidth]{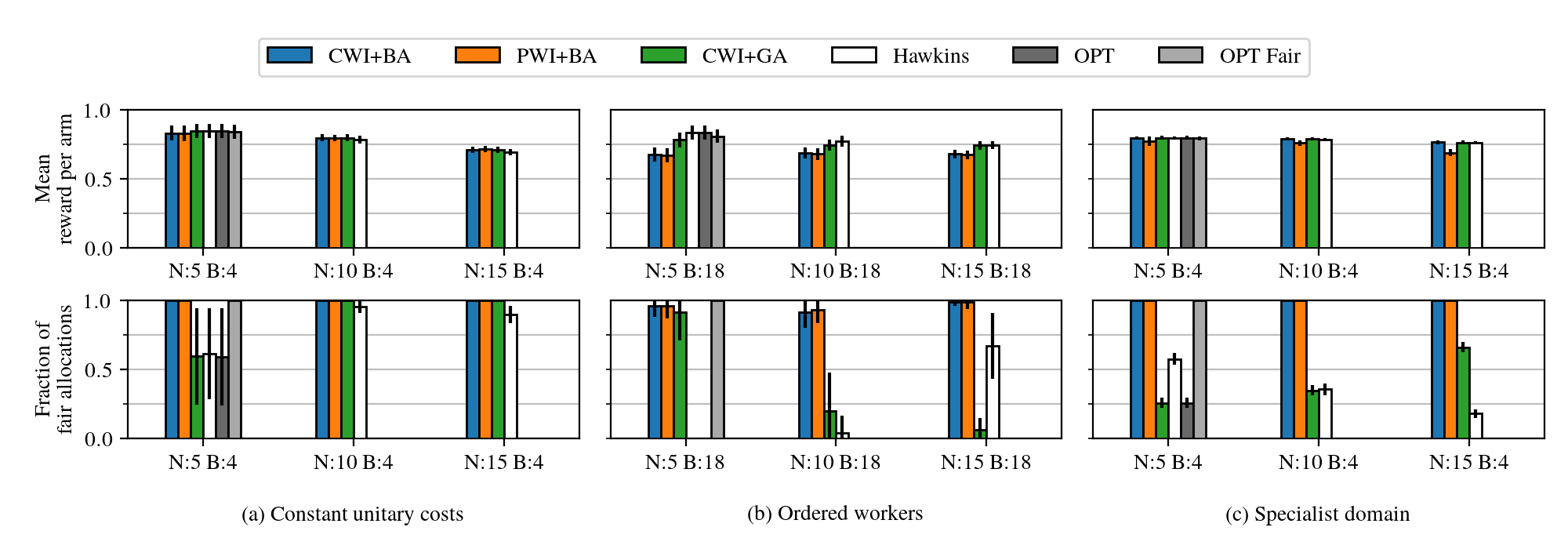}
    \caption{\textbf{Mean reward} (top row) and \textbf{fraction of time steps with fair allocation} (bottom row) for $N = 5, 10, 15$ arms. CWI+BA (blue) achieves the highest fraction of fair allocations than Hawkins (white) algorithm while \textbf{attaining almost similar reward as the  reward-maximizing baselines}.
    }
    \label{fig:experiments}
\end{figure*}

\begin{figure*}[!h]
    \centering
    \includegraphics[width=\textwidth]{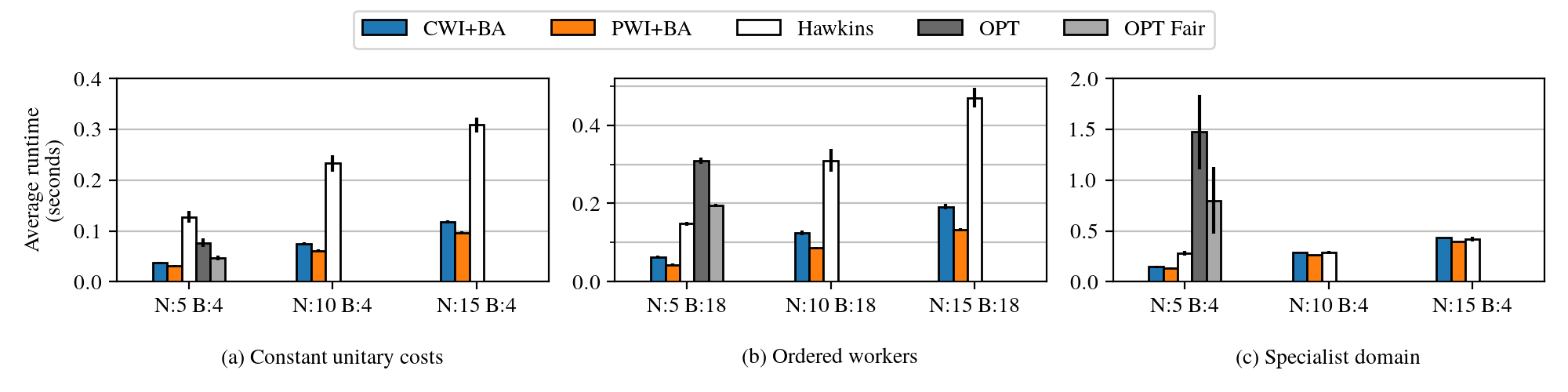}
    \caption{\textbf{Execution time} averaged over 50 epochs for $N = 5, 10, 15$. For a fixed time horizon of 100 steps, CWI+BA runs faster than Hawkins (white), OPT (dark gray), and OPT fair (light gray) for all instances in each of the three domains evaluated.}
    \label{fig:runtimes}
\end{figure*}

\begin{figure*}[!h]
    \centering
    \centering
     \begin{subfigure}[b]{0.63\textwidth}
        \centering
         \includegraphics[width=\textwidth]{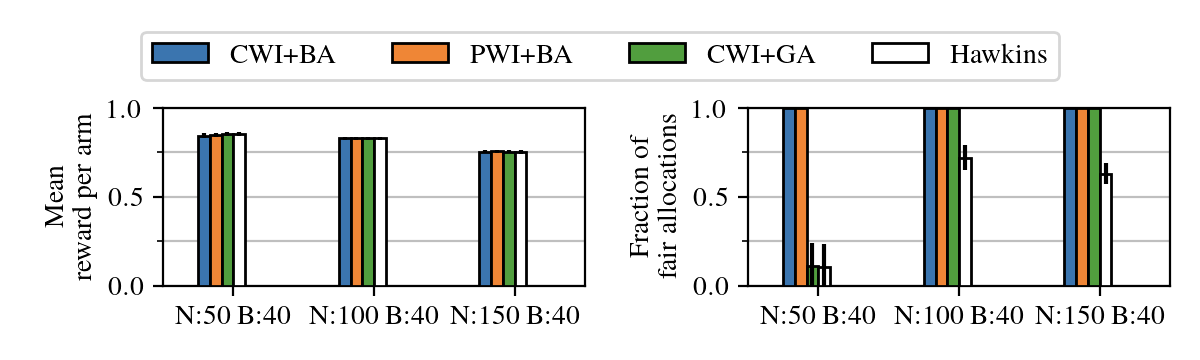}
         \label{fig:y equals x}
     \end{subfigure}
     \begin{subfigure}[b]{0.31\textwidth}
        \centering
        \raisebox{10mm}{
         \includegraphics[width=0.9\textwidth]{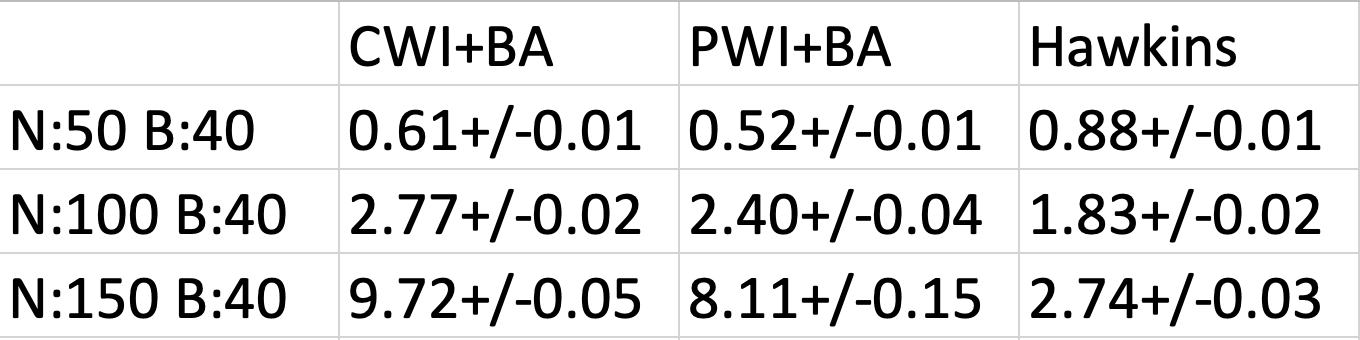}
         }
         \label{fig:y equals x2}
     \end{subfigure}
    \caption{The plot shows \textbf{mean reward} (left), \textbf{fairness} (middle), and \textbf{run time} (right) for $N = 50, 100, 150$ arms on \textbf{constant unitary costs} domain. CWI+GA scales well for larger instances, and even for N=150 arms, the average runtime is 10 seconds. 
    }
    \label{fig:experiments_large}
\end{figure*}

\section{Empirical Evaluation}
We evaluate our framework on three domains, namely \textbf{constant unitary costs}, \textbf{ordered workers}, and \textbf{specialist domain}, each highlighting various challenging dimensions of the MWRMAB problem 
(detailed in Appendix~\ref{sec:domain}). 
In the first domain, the cost associated with all worker-arm pairs is the same, but transition probabilities differ; the main challenge is in finding optimal assignments, though fairness is still considered. In the second domain, there exists an ordering among the workers such that the highest (or lowest) ranked worker has the highest (or lowest) probability of transitioning any arm to ``good'' state; making balancing optimal assignments with \textit{fair} assignments challenging. The final domain highlights the need to consider inter-action effects via Step 2. 

We run experiments by varying the number of arms for each domain. For the first and third domains that consider unit costs, we use $B=4$ budget per worker, and for the second domain where costs are in the range $[1,10]$, we use budget $B=18$. We ran all the experiments on Apple M1 with $3.2$ GHz Processor  and 16 GB RAM. We evaluate the average reward per arm over a fixed time horizon of 100 steps and averaged over 50 epochs with random or fixed transition probabilities that follow the characteristics of each domain. 

\paragraph{Baselines} 
We compare our approach, \textbf{CWI+BA} (Combined Whittle Index with Balanced Allocation), against:
\begin{itemize}[topsep=0em, leftmargin=*]
    \item \textbf{PWI+BA} (Per arm-worker Whittle Index with Balanced Allocation) that combines Steps 1 and 3 of our approach, skipping Step 2 (adjusted index algorithm)
    \item \textbf{CWI+GA} (Combined arm-worker Whittle Index with Greedy Allocation) that combines Steps 1 and 2 and, instead of Step 3 (balanced allocation), the highest values of indices are used for allocating arms to workers while ensuring budget constraint per timestep
    \item  \textbf{Hawkins}~[\citeyear{hawkins2003langrangian}] solves a discounted version of Eq.~\ref{eq:RMAB_lagrange} without the fairness constraint, to compute values of $\lambda_j$, then solves a knapsack over $\lambda_j$-adjusted Q-values
    \item \textbf{OPT} computes optimal solutions by running value iteration over the combinatorially-sized exact problem~(\ref{eq:RMAB_LP}) without The fairness constraint. 
    \item \textbf{OPT-fair} follows OPT, but adds the fairness constraints. These optimal algorithms are exponential in the number of arms, states, and workers, and thus, could only be executed on small instances.  
    \item \textbf{Random} takes random actions $j\in[M]\cup \{0\}$ on every arm while maintaining budget feasibility for every worker at each timestep
\end{itemize}

\paragraph{Results}

Figure~\ref{fig:experiments} shows that the reward obtained using our framework (CWI+BA) is comparable to that of the reward maximizing baselines (Hawkins and OPT) across all the domains. We observe at most $18.95\%$ reduction in reward compared to OPT, where the highest reduction occurs for ordered workers in ~Fig.~\ref{fig:experiments}(b). In terms of fairness, Figs.~\ref{fig:experiments}(a) and (c) show that CWI+BA achieves fair allocation among workers at all timesteps. In Figure~\ref{fig:experiments}(b) CWI+BA achieves fair allocation in almost all timesteps. The fraction of timesteps where fairness is attained by CWI+BA is significantly higher than Hawkins and OPT. 
We found an interesting corner case for the ordered worker's instances with heterogeneous costs where fairness was not attained (mainly because $N$ was not large enough compared to the budget). The instance was with $N=50$, $B=40$, and $M=3$. 
The worker costs were as follows: W1's cost for all agents was 1, W2's cost was 5, and W3's cost was 5. After 8 rounds of BA, all workers were allocated 8 agents, and W2 and W3's budgets of 40 were fulfilled. There were only 26 agents left to be allocated, and all of them were allocated to W1. In the end, W1 incurred a cost of 34 while W2 and W3 incurred a cost of 40 each. Thus, the fairness gap between W1 and the other two agents is 1 more than $c_{max}=5$. Assuming costs are drawn from $[1,10]$, the probability of encountering this instance is infinitesimally small.

Fig~\ref{fig:experiments}(b) also shows that Hawkins obtains \textit{unfair} solutions at every timestep ($0$ fairness) when N=5 and B=18, and, when N=10 and N=15, Hawkins is fair only 0.41 and 0.67 fractions of the time, respectively. \textbf{Thus, compared to reward maximizing baselines (Hawkins and OPT), CWI+BA achieves the highest fairness}. We also compare against two versions of our solution approach, namely, PWI+BA and CWI+GA. We observe that PWI+BA accumulates marginally lower reward while CWI+GA performs poorly in terms of fairness, hence asserting the importance of using CWI+BA for the MWRAMB problem.

Fig~\ref{fig:runtimes} shows that \textbf{CWI+BA is significantly faster than OPT-fair} (the optimal MWRMAB solution), with an execution time improvement of $33\%$, $78\%$ and $83\%$ for the three domains, respectively, when N=5. Moreover, for instances with N=10 onwards, both OPT and OPT-fair ran out of memory because the execution of the optimal algorithms required exponentially larger memory. 
However, we observe that {CWI+BA scales well even for $N=10$ and $N=15$} and runs within a few seconds, on average.

Fig.~\ref{fig:experiments_large} further demonstrates that our \textbf{CWI+BA scales well} and consistently outputs fair solutions  for higher values of $N$ and $B$. On larger instances, with $N\in\{50,100,150\}$, our approach achieves up to $374.92\%$ improvement in fairness with only $6.06\%$ reduction in reward, when compared against the reward-maximizing solution~\cite{hawkins2003langrangian}.

\textbf{In summary, CWI+BA is fairer than reward-maximizing algorithms (Hawkins and OPT) and much faster and scalable compared to the optimal fair solution (OPT fair), while accumulating reward comparable to Hawkins and OPT across all domains.} Therefore, CWI+BA is shown to be a fair and efficient solution for the MWRMAB problem.



\section{Conclusion}
We are the first to introduce multi-worker restless multi-armed bandit (MWRMAB) problem with worker-centric fairness. Our approach provides a scalable solution for the computationally hard MWRMAB problem. On comparing our approach against the (non-scalable) optimal fair policy on smaller instances, we find almost similar reward and fairness. 

Note that, assuming heterogeneous workers, an optimal solution (with indices computed via Step 2) would require solving a general version of the multiple knapsacks problem --- with m knapsacks (each denoting a worker with some capacity) and n items (each having a value and a cost, both of which vary depending on the knapsack to which the item is put into). There is no provable (approximate) solution for this general version of the multiple knapsacks problem in the literature. In addition to this challenging generalized multiple knapsack problem, in this work, we aim at finding a fair (balanced) allocation across all the knapsacks. The theoretical analysis of an approximation bound for the problem of balanced allocation with heterogeneous workers remains open. 

In summary, the  multi-worker restless multi-armed problem formulation provides a more general model for the intervention planning problem capturing the heterogeneity of intervention resources, and thus it is useful to appropriately model real-world domains such as anti-poaching patrolling and machine maintenance, where the interventions are provided by a human workforce. 


\begin{acks}
A. Biswas gratefully acknowledges the support of the Harvard Center for Research on Computation and Society (CRCS). J.A. Killian was supported by an National Science Foundation (NSF) Graduate Research Fellowship under grant DGE1745303. P. Rodriguez Diaz was supported by the NSF under grant IIS-1750358. Any opinions, findings, and conclusions or recommendations expressed in this material are those of the author(s) and do not necessarily reflect the views of NSF.
\end{acks}


\balance
\bibliographystyle{ACM-Reference-Format} 
\bibliography{references}


\begin{thebibliography}{25}


\ifx \showCODEN    \undefined \def \showCODEN     #1{\unskip}     \fi
\ifx \showDOI      \undefined \def \showDOI       #1{#1}\fi
\ifx \showISBNx    \undefined \def \showISBNx     #1{\unskip}     \fi
\ifx \showISBNxiii \undefined \def \showISBNxiii  #1{\unskip}     \fi
\ifx \showISSN     \undefined \def \showISSN      #1{\unskip}     \fi
\ifx \showLCCN     \undefined \def \showLCCN      #1{\unskip}     \fi
\ifx \shownote     \undefined \def \shownote      #1{#1}          \fi
\ifx \showarticletitle \undefined \def \showarticletitle #1{#1}   \fi
\ifx \showURL      \undefined \def \showURL       {\relax}        \fi
\providecommand\bibfield[2]{#2}
\providecommand\bibinfo[2]{#2}
\providecommand\natexlab[1]{#1}
\providecommand\showeprint[2][]{arXiv:#2}

\bibitem[Abbou and Makis(2019)]%
        {abbou2019group}
\bibfield{author}{\bibinfo{person}{Abderrahmane Abbou} {and}
  \bibinfo{person}{Viliam Makis}.} \bibinfo{year}{2019}\natexlab{}.
\newblock \showarticletitle{Group maintenance: A restless bandits approach}.
\newblock \bibinfo{journal}{\emph{INFORMS Journal on Computing}}
  \bibinfo{volume}{31}, \bibinfo{number}{4} (\bibinfo{year}{2019}),
  \bibinfo{pages}{719--731}.
\newblock


\bibitem[Adelman and Mersereau(2008)]%
        {adelman2008relaxations}
\bibfield{author}{\bibinfo{person}{Daniel Adelman} {and}
  \bibinfo{person}{Adam~J. Mersereau}.} \bibinfo{year}{2008}\natexlab{}.
\newblock \showarticletitle{Relaxations of weakly coupled stochastic dynamic
  programs}.
\newblock \bibinfo{journal}{\emph{Operations Research}} \bibinfo{volume}{56},
  \bibinfo{number}{3} (\bibinfo{year}{2008}), \bibinfo{pages}{712--727}.
\newblock


\bibitem[Akbarzadeh and Mahajan(2019)]%
        {Akbarzadeh2019}
\bibfield{author}{\bibinfo{person}{N. Akbarzadeh} {and} \bibinfo{person}{A.
  Mahajan}.} \bibinfo{year}{2019}\natexlab{}.
\newblock \showarticletitle{Restless bandits with controlled restarts:
  Indexability and computation of Whittle index}. In
  \bibinfo{booktitle}{\emph{2019 IEEE Conference on Decision and Control}}.
  IEEE.
\newblock


\bibitem[Biswas et~al\mbox{.}(2021)]%
        {biswas2021learning}
\bibfield{author}{\bibinfo{person}{Arpita Biswas}, \bibinfo{person}{Gaurav
  Aggarwal}, \bibinfo{person}{Pradeep Varakantham}, {and}
  \bibinfo{person}{Milind Tambe}.} \bibinfo{year}{2021}\natexlab{}.
\newblock \showarticletitle{Learning Index Policies for Restless Bandits with
  Application to Maternal Healthcare}. In \bibinfo{booktitle}{\emph{Proceedings
  of the 20th International Conference on Autonomous Agents and MultiAgent
  Systems}}. \bibinfo{pages}{1467--1468}.
\newblock


\bibitem[Biswas and Barman(2018)]%
        {biswas2018fair}
\bibfield{author}{\bibinfo{person}{Arpita Biswas} {and}
  \bibinfo{person}{Siddharth Barman}.} \bibinfo{year}{2018}\natexlab{}.
\newblock \showarticletitle{Fair division under cardinality constraints}. In
  \bibinfo{booktitle}{\emph{Proceedings of the 27th International Joint
  Conference on Artificial Intelligence}}. \bibinfo{pages}{91--97}.
\newblock


\bibitem[Brandt et~al\mbox{.}(2016)]%
        {brandt2016handbook}
\bibfield{author}{\bibinfo{person}{Felix Brandt}, \bibinfo{person}{Vincent
  Conitzer}, \bibinfo{person}{Ulle Endriss}, \bibinfo{person}{J{\'e}r{\^o}me
  Lang}, {and} \bibinfo{person}{Ariel~D Procaccia}.}
  \bibinfo{year}{2016}\natexlab{}.
\newblock \bibinfo{booktitle}{\emph{{Handbook of computational social choice,
  Chapter 12}}}.
\newblock \bibinfo{publisher}{Cambridge University Press}.
\newblock


\bibitem[Budish(2011)]%
        {budish2011combinatorial}
\bibfield{author}{\bibinfo{person}{Eric Budish}.}
  \bibinfo{year}{2011}\natexlab{}.
\newblock \showarticletitle{The combinatorial assignment problem: Approximate
  competitive equilibrium from equal incomes}.
\newblock \bibinfo{journal}{\emph{Journal of Political Economy}}
  \bibinfo{volume}{119}, \bibinfo{number}{6} (\bibinfo{year}{2011}),
  \bibinfo{pages}{1061--1103}.
\newblock


\bibitem[Chen et~al\mbox{.}(2020)]%
        {chen2020fair}
\bibfield{author}{\bibinfo{person}{Yifang Chen}, \bibinfo{person}{Alex
  Cuellar}, \bibinfo{person}{Haipeng Luo}, \bibinfo{person}{Jignesh Modi},
  \bibinfo{person}{Heramb Nemlekar}, {and} \bibinfo{person}{Stefanos
  Nikolaidis}.} \bibinfo{year}{2020}\natexlab{}.
\newblock \showarticletitle{Fair contextual multi-armed bandits: Theory and
  experiments}. In \bibinfo{booktitle}{\emph{Conference on Uncertainty in
  Artificial Intelligence}}. PMLR, \bibinfo{pages}{181--190}.
\newblock


\bibitem[Glazebrook et~al\mbox{.}(2011)]%
        {glazebrook2011general}
\bibfield{author}{\bibinfo{person}{Kevin~D. Glazebrook},
  \bibinfo{person}{David~J. Hodge}, {and} \bibinfo{person}{Christopher
  Kirkbride}.} \bibinfo{year}{2011}\natexlab{}.
\newblock \showarticletitle{General notions of indexability for queueing
  control and asset management}.
\newblock \bibinfo{journal}{\emph{The Annals of Applied Probability}}
  \bibinfo{volume}{21}, \bibinfo{number}{3} (\bibinfo{year}{2011}),
  \bibinfo{pages}{876--907}.
\newblock


\bibitem[Gocgun and Ghate(2012)]%
        {gocgun2012lagrangian}
\bibfield{author}{\bibinfo{person}{Yasin Gocgun} {and} \bibinfo{person}{Archis
  Ghate}.} \bibinfo{year}{2012}\natexlab{}.
\newblock \showarticletitle{Lagrangian relaxation and constraint generation for
  allocation and advanced scheduling}.
\newblock \bibinfo{journal}{\emph{Computers \& Operations Research}}
  \bibinfo{volume}{39}, \bibinfo{number}{10} (\bibinfo{year}{2012}),
  \bibinfo{pages}{2323--2336}.
\newblock


\bibitem[Hawkins(2003)]%
        {hawkins2003langrangian}
\bibfield{author}{\bibinfo{person}{Jeffrey~Thomas Hawkins}.}
  \bibinfo{year}{2003}\natexlab{}.
\newblock \emph{\bibinfo{title}{A Langrangian decomposition approach to weakly
  coupled dynamic optimization problems and its applications}}.
\newblock \bibinfo{thesistype}{Ph.\,D. Dissertation}.
  \bibinfo{school}{Massachusetts Institute of Technology}.
\newblock


\bibitem[Herlihy et~al\mbox{.}(2021)]%
        {herlihy2021planning}
\bibfield{author}{\bibinfo{person}{Christine Herlihy}, \bibinfo{person}{Aviva
  Prins}, \bibinfo{person}{Aravind Srinivasan}, {and} \bibinfo{person}{John
  Dickerson}.} \bibinfo{year}{2021}\natexlab{}.
\newblock \showarticletitle{Planning to Fairly Allocate: Probabilistic Fairness
  in the Restless Bandit Setting}.
\newblock \bibinfo{journal}{\emph{arXiv preprint arXiv:2106.07677}}
  (\bibinfo{year}{2021}).
\newblock


\bibitem[Joseph et~al\mbox{.}(2016)]%
        {joseph2016fairness}
\bibfield{author}{\bibinfo{person}{Matthew Joseph}, \bibinfo{person}{Michael
  Kearns}, \bibinfo{person}{Jamie~H Morgenstern}, {and} \bibinfo{person}{Aaron
  Roth}.} \bibinfo{year}{2016}\natexlab{}.
\newblock \showarticletitle{Fairness in Learning: Classic and Contextual
  Bandits}.
\newblock \bibinfo{journal}{\emph{Advances in Neural Information Processing
  Systems}}  \bibinfo{volume}{29} (\bibinfo{year}{2016}),
  \bibinfo{pages}{325--333}.
\newblock


\bibitem[Killian et~al\mbox{.}(2023)]%
        {killian2023robust}
\bibfield{author}{\bibinfo{person}{Jackson Killian}, \bibinfo{person}{Arpita
  Biswas}, \bibinfo{person}{Lily Xu}, \bibinfo{person}{Shresth Verma},
  \bibinfo{person}{Vineet Nair}, \bibinfo{person}{Aparna Taneja},
  \bibinfo{person}{Aparna Hegde}, \bibinfo{person}{Neha Madhiwalla},
  \bibinfo{person}{Paula~Rodriguez Diaz}, \bibinfo{person}{Sonja Johnson-Yu},
  {et~al\mbox{.}}} \bibinfo{year}{2023}\natexlab{}.
\newblock \showarticletitle{Robust Planning over Restless Groups: Engagement
  Interventions for a Large-Scale Maternal Telehealth Program}.
\newblock  (\bibinfo{year}{2023}).
\newblock


\bibitem[Mate et~al\mbox{.}(2020)]%
        {mate2020collapsing}
\bibfield{author}{\bibinfo{person}{Aditya Mate}, \bibinfo{person}{Jackson~A
  Killian}, \bibinfo{person}{Haifeng Xu}, \bibinfo{person}{Andrew Perrault},
  {and} \bibinfo{person}{Milind Tambe}.} \bibinfo{year}{2020}\natexlab{}.
\newblock \showarticletitle{Collapsing Bandits and Their Application to Public
  Health Interventions}. In \bibinfo{booktitle}{\emph{Advances in Neural
  Information Processing Systems}}.
\newblock


\bibitem[Mate et~al\mbox{.}(2022)]%
        {mate2021field}
\bibfield{author}{\bibinfo{person}{Aditya Mate}, \bibinfo{person}{Lovish
  Madaan}, \bibinfo{person}{Aparna Taneja}, \bibinfo{person}{Neha Madhiwalla},
  \bibinfo{person}{Shresth Verma}, \bibinfo{person}{Gargi Singh},
  \bibinfo{person}{Aparna Hegde}, \bibinfo{person}{Pradeep Varakantham}, {and}
  \bibinfo{person}{Milind Tambe}.} \bibinfo{year}{2022}\natexlab{}.
\newblock \showarticletitle{Field Study in Deploying Restless Multi-Armed
  Bandits: Assisting Non-Profits in Improving Maternal and Child Health}.
\newblock \bibinfo{journal}{\emph{Proceedings of the AAAI Conference on
  Artificial Intelligence}} (\bibinfo{year}{2022}).
\newblock


\bibitem[Meshram and Kaza(2020)]%
        {meshram2020simulation}
\bibfield{author}{\bibinfo{person}{Rahul Meshram} {and} \bibinfo{person}{Kesav
  Kaza}.} \bibinfo{year}{2020}\natexlab{}.
\newblock \showarticletitle{Simulation based algorithms for Markov decision
  processes and multi-action restless bandits}.
\newblock \bibinfo{journal}{\emph{arXiv preprint arXiv:2007.12933}}
  (\bibinfo{year}{2020}).
\newblock


\bibitem[Meshram et~al\mbox{.}(2015)]%
        {meshram2015restless}
\bibfield{author}{\bibinfo{person}{Rahul Meshram}, \bibinfo{person}{D
  Manjunath}, {and} \bibinfo{person}{Aditya Gopalan}.}
  \bibinfo{year}{2015}\natexlab{}.
\newblock \showarticletitle{A restless bandit with no observable states for
  recommendation systems and communication link scheduling}. In
  \bibinfo{booktitle}{\emph{2015 54th IEEE Conference on Decision and Control
  (CDC)}}. IEEE, \bibinfo{pages}{7820--7825}.
\newblock


\bibitem[Papadimitriou and Tsitsiklis(1994)]%
        {papadimitriou1994complexity}
\bibfield{author}{\bibinfo{person}{Christos~H Papadimitriou} {and}
  \bibinfo{person}{John~N Tsitsiklis}.} \bibinfo{year}{1994}\natexlab{}.
\newblock \showarticletitle{The complexity of optimal queueing network
  control}. In \bibinfo{booktitle}{\emph{Proceedings of IEEE 9th Annual
  Conference on Structure in Complexity Theory}}. IEEE,
  \bibinfo{pages}{318--322}.
\newblock


\bibitem[Patil et~al\mbox{.}(2020)]%
        {patil2020achieving}
\bibfield{author}{\bibinfo{person}{Vishakha Patil}, \bibinfo{person}{Ganesh
  Ghalme}, \bibinfo{person}{Vineet Nair}, {and} \bibinfo{person}{Y Narahari}.}
  \bibinfo{year}{2020}\natexlab{}.
\newblock \showarticletitle{Achieving fairness in the stochastic multi-armed
  bandit problem}. In \bibinfo{booktitle}{\emph{Proceedings of the AAAI
  Conference on Artificial Intelligence}}, Vol.~\bibinfo{volume}{34}.
  \bibinfo{pages}{5379--5386}.
\newblock


\bibitem[Prins et~al\mbox{.}(2020)]%
        {prins2020incorporating}
\bibfield{author}{\bibinfo{person}{Aviva Prins}, \bibinfo{person}{Aditya Mate},
  \bibinfo{person}{Jackson~A Killian}, \bibinfo{person}{Rediet Abebe}, {and}
  \bibinfo{person}{Milind Tambe}.} \bibinfo{year}{2020}\natexlab{}.
\newblock \showarticletitle{Incorporating Healthcare Motivated Constraints in
  Restless Bandit Based Resource Allocation}.
\newblock \bibinfo{journal}{\emph{preprint}} (\bibinfo{year}{2020}).
\newblock


\bibitem[Qian et~al\mbox{.}(2016)]%
        {qian2016restless}
\bibfield{author}{\bibinfo{person}{Yundi Qian}, \bibinfo{person}{Chao Zhang},
  \bibinfo{person}{Bhaskar Krishnamachari}, {and} \bibinfo{person}{Milind
  Tambe}.} \bibinfo{year}{2016}\natexlab{}.
\newblock \showarticletitle{Restless poachers: Handling
  exploration-exploitation tradeoffs in security domains}. In
  \bibinfo{booktitle}{\emph{Proceedings of the 2016 International Conference on
  Autonomous Agents \& Multiagent Systems}}. \bibinfo{pages}{123--131}.
\newblock


\bibitem[Weber and Weiss(1990)]%
        {weber1990index}
\bibfield{author}{\bibinfo{person}{Richard~R Weber} {and}
  \bibinfo{person}{Gideon Weiss}.} \bibinfo{year}{1990}\natexlab{}.
\newblock \showarticletitle{On an index policy for restless bandits}.
\newblock \bibinfo{journal}{\emph{J. Appl. Probab.}} \bibinfo{volume}{27},
  \bibinfo{number}{3} (\bibinfo{year}{1990}), \bibinfo{pages}{637--648}.
\newblock


\bibitem[Whittle(1988)]%
        {whittle1988restless}
\bibfield{author}{\bibinfo{person}{Peter Whittle}.}
  \bibinfo{year}{1988}\natexlab{}.
\newblock \showarticletitle{Restless bandits: Activity allocation in a changing
  world}.
\newblock \bibinfo{journal}{\emph{Journal of applied probability}}
  (\bibinfo{year}{1988}), \bibinfo{pages}{287--298}.
\newblock


\bibitem[Wu et~al\mbox{.}(2021)]%
        {wu2021budget}
\bibfield{author}{\bibinfo{person}{Xiaowei Wu}, \bibinfo{person}{Bo Li}, {and}
  \bibinfo{person}{Jiarui Gan}.} \bibinfo{year}{2021}\natexlab{}.
\newblock \showarticletitle{Budget-feasible Maximum Nash Social Welfare is
  Almost Envy-free.}. In \bibinfo{booktitle}{\emph{The 30th International Joint
  Conference on Artificial Intelligence (IJCAI 2021)}}. \bibinfo{pages}{1--16}.
\newblock


\end{thebibliography}

\newpage
\appendix
\section{Whittle Index computation}\label{sec:Qian}
\begin{algorithm}[!h]

\caption{Whittle Index Computation \cite{qian2016restless}}
\label{alg:WI}
\raggedleft{\textbf{Input}: Two-action MDP$_{ij}$ and cost $c_{ij}.\qquad\qquad\qquad\qquad\qquad$}\\
\textbf{Output}: Decoupled Whittle index $\lambda^\star_{ij}(s)$ for each $s\in\mathcal{S}_i.\qquad\quad$

\begin{algorithmic}[1] 

\STATE $ub, lb = \textsc{InitBSBounds}(\mbox{MDP}_{ij})$ \COMMENT{Return upper and lower bounds on $\lambda^\star_{ij}(s)$ given $\mbox{MDP}_{ij}$} \\
\WHILE{$ub - lb > \epsilon$}
\STATE $\lambda_{ij} = \frac{ub+lb}{2}$
\STATE $a = \textsc{ValueIteration}(\mbox{MDP}_{ij},s,\lambda_{ij})$ \COMMENT{with updated reward $R(s,a,\lambda_j) = R(s) - c_{ij} \lambda_{ij}$}
\IF{$a\ne j$}
\STATE $ub=\lambda_{ij}$ \COMMENT{Charging too much, decrease}
\ELSIF{$a=j$}
\STATE $lb=\lambda_{ij}$ \COMMENT{Can charge more, increase}
\ENDIF
\STATE $\lambda^\star_{ij}(s) = \frac{ub+lb}{2}$
\ENDWHILE
\STATE \textbf{return} $\lambda^\star_{ij}(s)$
\end{algorithmic}

\end{algorithm}

\section{Proof of Theorem 2}\label{sec:proof}
\begin{theorem2}
As $\lambda_{j^\prime} \rightarrow{} 0 \hspace{2mm} \forall j^\prime \ne j$, $\lambda^{adj,*}_j$ will monotonically decrease, if (1) $V_{\lambda_{j^\prime}}^{\dagger}(s,\lambda_j,j^\prime) \ge V_{\lambda_{j^\prime}}^{\dagger}(s,\lambda_j,0)$ for 0 $\le \lambda_{j^\prime} \le \epsilon$ and (2) if the average cost of worker $j^\prime$ under the optimal policy starting with action $j^\prime$ is greater than the average cost of worker $j^\prime$ under the optimal policy starting with action $j$.
\end{theorem2}
\begin{proof}
Let $j^\prime$ be the action such that
\begin{equation*}
    V_{\lambda_{j^\prime}=\epsilon}^{\dagger}(s,\lambda^{adj,*}_{j,\lambda_{j^\prime}=\epsilon},a=j^\prime) = V_{\lambda_{j^\prime}=\epsilon}^{\dagger}(s,\lambda^{adj,*}_{j,\lambda_{j^\prime}=\epsilon},a=j)
\end{equation*} 
when $\lambda_{j^\prime} = \epsilon$ and $\lambda_{j^{\prime\prime}} = 0 \hspace{1mm} \forall j^{\prime\prime} \in [M]\setminus\{j,j^\prime\}$. Then at $\lambda_{j^\prime} = 0$, both $V_{\lambda_{j^\prime}}^{\dagger}(s,\lambda_{j},a=j^\prime)$ and $V_{\lambda_{j^\prime}}^{\dagger}(s,\lambda_{j},a=j)$ will increase since the charge for taking action $j^\prime$ decreases. Moreover, given (1), $j^\prime$ will still be the ``next-best'' action to take, when computing the new $\lambda_{j,\lambda_{j^\prime}=0}^{adj,*}$. Given (2), we have the following:
\begin{equation}
    \frac{dV_{\lambda_{j^\prime}}^{\dagger}(s,\lambda_j,j^\prime)}{d\lambda_{j^\prime}} \ge \frac{dV_{\lambda_{j^\prime}}^{\dagger}(s,\lambda_j,j)}{d\lambda_{j^\prime}}
\end{equation}
Which implies that, when $\lambda_{j^\prime}$ changes from $\epsilon$ to $0$, the curve (in $\lambda_{j}$-space) $V_{\lambda_{j^\prime}=0}^{\dagger}(s,\lambda_{j},a=j^\prime)$ increases (shifts up) by an amount equal to or larger than the curve $V_{\lambda_{j^\prime}=0}^{\dagger}(s,\lambda_{j},a=j)$. Since both curves are convex and monotone decreasing in $\lambda_j$, and since $V_{\lambda_{j^\prime}}^{\dagger}(s,\lambda_{j},a=j) > V_{\lambda_{j^\prime}}^{\dagger}(s,\lambda_{j},a=j^\prime)$ at points $\lambda_j < \lambda^{adj,*}_{j,\lambda_{j^\prime}}$ by definition of the index in Eq.~\ref{def:adjusted_index} and convexity, this implies that the point of intersection of those two curves in $\lambda_{j}$-space has decreased (shifted left), i.e., $\lambda^{adj,*}_{j,\lambda_{j^\prime}=0} \le \lambda^{adj,*}_{j,\lambda_{j^\prime}=\epsilon}$.
\end{proof}

\section{Experimental Domains}\label{sec:domain}
\textbf{Constant Costs:} 
In this setting, all arm-worker assignment costs are the same, i.e., every $c_{i,j} = c$ for all $i \in [N]$ and $j \in [M]$ but the transition probabilities differ. The transition probabilities are generated in a way that ensures intervening on is better than no-intervention, i.e., $P_{ij}^{ss'} \ge P_{i0}^{ss'}$ for any pair of states $s$ and $s'$ and any $j\in[M]$. For the simulation, we assume $2$ states and $2$ workers, and vary the number of arms and budget. This domain captures real-world settings such as project management---one of the original inspirations of \citet{whittle1988restless}, that we extend to multiple workers--- where the goal is to find optimal assignments over a sequence of rounds, while ensuring equitable assignments among workers each round.

\textbf{Ordered Workers:}
In this setting, there is an ordering on the effectiveness among the workers---worker $1$ produces better intervention effects than worker $2$ on all arms, worker $2$  produces better intervention effects than worker $3$, and so on. For the simulation, we generate transition probabilities in a way that ensures this ordering. This problem structure makes reward-maximizing (fairness-unaware) algorithms produce unfair solutions, since they prefer to over-assign to certain workers. Additionally, we assign the costs $c_{ij}$s by drawing values uniformly at random in the range $[1-10]$, making it challenging to find well-performing  solutions that also satisfy the budget. We consider $2$ states and $3$ workers, while varying the number of arms and budget. 
This domain is relevant to settings where workers have different levels of proficiency, i.e., deliver interventions that are more likely to boost arms to a good state, and where a measure of effort is considered during planning, causing different costs $c_{ij}$, e.g., due to differing travel times from workers to arms.


\textbf{Specialist Domain:} In this domain, the MDPs for each arm have transition probabilities as given in Fig.~\ref{fig:decoupled_counterexample}. These MDPs have a structure such that certain states require ``specialist'' worker actions to move to a new state. This is the same as the anti-poaching example given in section ~\ref{sec:adjusted_index}. Specifically, the optimal policy should assign arms in state 0 to worker 1 and arms in state 1 to worker 2. However, the decoupled index computation (Step 1) produces indices that lead to suboptimal policies, since it considers restricted MDPs with only 2-actions at a time. Alternatively, our adjusted index computation (Step 1+2) reasons about inter-action effects properly and so should perform near-optimally. 
For the simulation, we consider $3$ states and $2$ workers. 
\begin{figure*}[!h]
\centering
\includegraphics[width=0.7\textwidth]{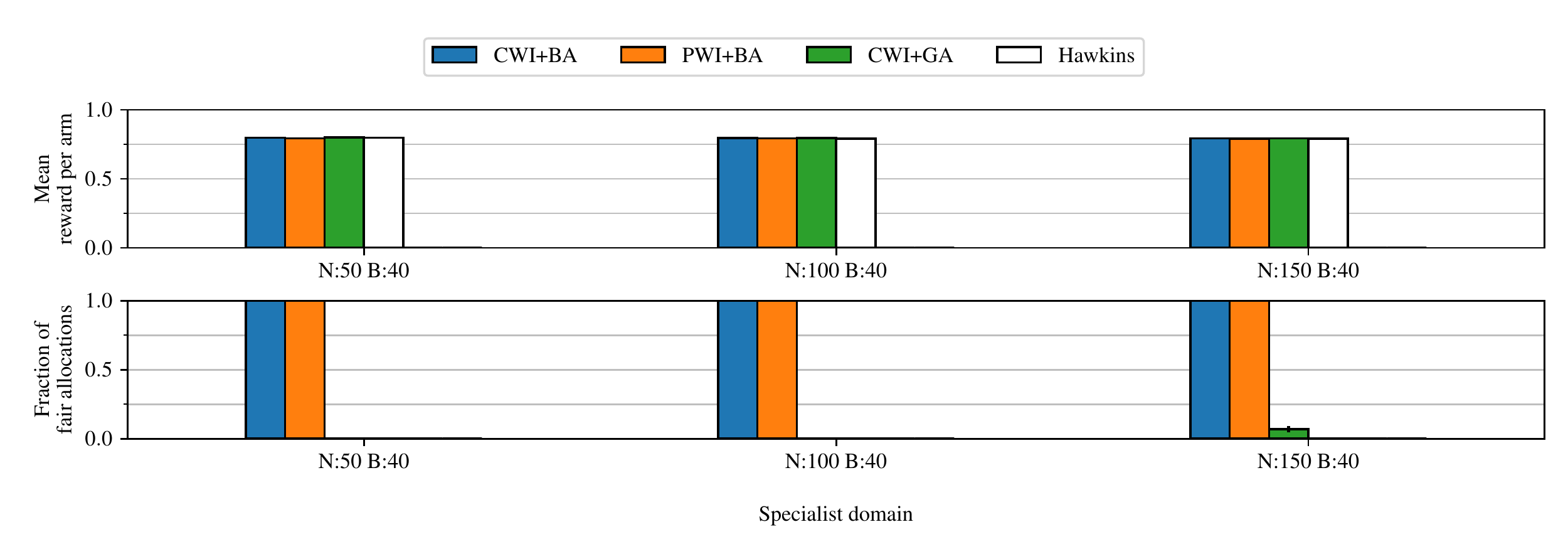} 
\caption{More results for the specialist domain with larger $N$ and $B$.}
\label{fig:appendix:more_results}
\end{figure*}

\section{Limitations and Ethical Concerns} \label{sec:limitations}
In this work, we focus on scenarios where the costs of interventions are computed by the planner. In scenarios, such as allocating tasks on crowdsourcing platforms (e.g., MTurk), where costs for performing tasks are declared by strategic crowdworkers themselves in the form of bids, the workers may not report the true costs if doing so helps them gain higher benefits from the system. To avoid such strategic behavior, strategy-proof mechanisms are required. This leads to an interesting research direction, which is outside the scope of this paper. 

We also note that our algorithm is more apt for larger-scale problems where OPT-fair is unable to run. For small-scale problems, such as $N=5$, it might be possible to execute the OPT-fair algorithm and obtain a fair and efficient solution. However, as shown in Figures~\ref{fig:experiments_large} and \ref{fig:appendix:more_results}, our algorithm performs well even for N as large as 150. So, we expect our method to be applicable for obtaining fair allocations in larger-scale problems.      

\textbf{Ethical Concerns}  In practice, the workers may have other cultural and family constraints that are hard to capture and formalize in mathematical terms. Therefore, it is important to have human-AI collaboration to assess the output of our algorithm. Moreover, although our proposed framework enables intervention resources to be human workforce (who pull the arms) and considers fairness among workers, it is better suited for domains where the arms themselves are non-human entities, such as \textit{areas} in anti-poaching patrolling or \textit{machines} in machine maintenance problem. In domains where arms correspond to human beings, it is also important to be mindful of fairness across the arms.

\section{More Results} \label{sec:appendix:more_results}
See Fig.~\ref{fig:appendix:more_results} for additional results on larger problem settings.

We observe that the reward obtained by our proposed algorithm (CWI+BA) is almost similar to the reward-maximizing algorithm (Hawkins). Moreover, CWI+BA achieves maximum fairness. In contrast, Hawkins' algorithm attains almost $0$ fairness in all the runs. Note that, the OPT and OPT-fair algorithms could not be executed on larger instances because of larger memory requirements. Therefore, we could not compare against optimal algorithms for larger instances.  


\end{document}